\documentclass{article}

\usepackage{microtype}
\usepackage{graphicx}
\usepackage{subcaption}
\usepackage{booktabs} 
\usepackage{algorithm}
\usepackage{algpseudocode} 
\usepackage{xcolor}
\usepackage{multirow}
\newcommand{\Var}{\operatorname{Var}}
\usepackage{hyperref}
\usepackage[accepted]{icml2026}

\usepackage{amsmath}
\usepackage{amssymb}
\usepackage{mathtools}
\usepackage{amsthm}
\usepackage{comment}

\usepackage{booktabs}
\usepackage{multirow}
\usepackage{graphicx}
\usepackage{pifont}
\allowdisplaybreaks[4]
\newcommand{\cmark}{\ding{51}} 
\newcommand{\xmark}{\ding{55}} 
\newcommand{\shdr}[1]{\rotatebox[origin=c]{60}{\scriptsize\strut #1}}

\usepackage[capitalize,noabbrev]{cleveref}

\theoremstyle{plain}
\newtheorem{theorem}{Theorem}[section]
\newtheorem{proposition}[theorem]{Proposition}
\newtheorem{lemma}[theorem]{Lemma}
\newtheorem{corollary}[theorem]{Corollary}
\theoremstyle{definition}

\theoremstyle{remark}

\usepackage[textsize=tiny]{todonotes}

\newcommand{\method}{HyPER}

\icmltitlerunning{\method: Bridging Exploration and Exploitation for Scalable LLM Reasoning with Hypothesis Path Expansion and Reduction}

\begin{document}

\twocolumn[

\icmltitle{\method: Bridging Exploration and Exploitation for Scalable LLM Reasoning \\with Hypothesis Path Expansion and Reduction}



\icmlsetsymbol{equal}{*}

\begin{icmlauthorlist}
  \icmlauthor{Shengxuan Qiu}{equal,yyy,sch}
  \icmlauthor{Haochen Huang}{equal,yyy,sch}
  \icmlauthor{Shuzhang Zhong}{yyy,sch}
  \icmlauthor{Pengfei Zuo}{comp}
  \icmlauthor{Meng Li}{yyy,sch}
\end{icmlauthorlist}

\icmlaffiliation{yyy}{Institute for Artificial Intelligence, Peking University, Beijing}
\icmlaffiliation{comp}{Huawei}
\icmlaffiliation{sch}{School of Integrated Circuits, Peking University, Beijing}

\icmlcorrespondingauthor{Pengfei Zuo}{pfzuo.cs@gmail.com}
\icmlcorrespondingauthor{Meng Li}{meng.li@pku.edu.cn}

  \icmlkeywords{Machine Learning, ICML}

  \vskip 0.3in
]



 \printAffiliationsAndNotice{\icmlEqualContribution}

\begin{abstract}
Scaling test-time compute with multi-path chain-of-thought can improve reasoning accuracy, but its gains hinge on an effective exploration--exploitation trade-off. 
Existing methods handle this trade-off in rigid ways: tree-structured search hard-codes exploration via brittle expansion rules that disrupt post-trained reasoning, while parallel reasoning over-explores redundant hypothesis paths and relies on a weak answer selection strategy.
Driven by the insight that the optimal balance is \emph{phase-dependent} and that correct vs.\ incorrect paths often \emph{diverge only at late stages}, we reconceptualize test-time scaling as a dynamic \emph{expand--reduce} control problem over a pool of hypothesis paths. We introduce \emph{HyPER}, a \emph{training-free online control policy} for MoE multi-path decoding that reallocates compute under a fixed budget using lightweight path statistics.
HyPER features (i) an \emph{online controller} that shifts from exploration to exploitation as the hypothesis pool evolves, (ii) an MoE-based token-level refinement primitive for efficient \emph{generation-time exploitation} without full-path resampling, and (iii) a length- and confidence-aware aggregation rule to bridge the existence--selection gap for reliable \emph{answer-time exploitation}.
Extensive experimental results across four MoE models and diverse benchmarks demonstrate HyPER consistently achieves the accuracy--compute Pareto frontier, outperforming prior-art methods by $8 \sim 10\%$ while reducing token consumption by $25 \sim 40\%$. Code is available at \url{https://github.com/ShengxuanQiu/HyPER}.
\end{abstract}


\section{Introduction}
\label{sec:intro}

Large language models (LLMs) have demonstrated remarkable reasoning capabilities when prompted to produce chain-of-thought (CoT)~\citep{wei2022chain}. However, a single CoT remains brittle on complex problems~\citep{wang2022self,zhao2025chain}. 
Scaling \emph{test-time compute} mitigates this fragility by exploring multiple hypothesis reasoning paths simultaneously to maximize the probability of success~\cite{DBLP:journals/corr/abs-2503-24235}. Fundamentally, this process necessitates a delicate balance between \emph{exploration} (i.e., generating diverse hypotheses) and \emph{exploitation} (i.e., refining promising hypotheses) under strict compute budgets~\citep{snell2024scaling,ding2025dynamic}.

Existing test-time scaling methods typically fall into two paradigms, yet both struggle to maintain this balance. Test-time scaling can be instantiated at three \emph{granularities}: \emph{token} (a single decoding iteration), \emph{step} (a short reasoning chunk spanning multiple tokens), and \emph{path} (a complete CoT path from prompt to final answer). Tree-based search schemes explicitly grow intermediate steps and use step-level rewards to guide branching (i.e., expanding multiple candidate next steps)~\citep{yao2023tree,xie2024monte,snell2024scaling,ets}.
However, they enforce a rigid exploration schedule that requires branching after predefined reasoning steps. This not only wastes compute on unnecessary exploration, but also disrupts the native continuous reasoning flow of post-trained models~\citep{lian2025threadweaver,qwen3technicalreport}.

In contrast, \emph{parallel reasoning}, e.g., Self-Consistency and Best-of-$N$, samples multiple complete CoT paths and aggregates their results~\citep{wang2022self,brown2024monkeys,irvine2023reward}.
It preserves native generation, but the balance tilts \emph{heavily toward exploration}: much of the budget is spent on redundant near-duplicate paths, and it offers limited \emph{test-time exploitation} which aims to improve the reasoning quality. Recent \emph{adaptive} parallel reasoning variants aim to improve this trade-off, yet typically adjust only part of the equation.
For example, \emph{spawn--join} methods introduce more structured exploration by dynamically branching and joining sub-reasoning steps, but their exploitation behavior is usually \emph{learned} and thus requires specialized training (e.g., SFT or RL) to decide when to branch and how to join~\citep{lian2025threadweaver,wang2025adareasoneradaptivereasoningenables,yu2025thinksmarterharderadaptive}.
Training-free approaches such as DeepConf~\citep{fu2025deepconf} make exploration cheaper via pruning, but remain \emph{purely pruning}: they mainly \emph{reduce exploration} (filter low-quality paths) without explicitly reallocating the saved budget into stronger \emph{exploitation}, leaving promising hypotheses under-refined and still relying on large initial widths for coverage.

Recently, reasoning models increasingly adopt \emph{Mixture-of-Experts} (MoE) architectures, which offer inherent routing diversity for token-level refinement (e.g., \citep{roe_hyper_parallel}).
While this unlocks efficient local improvement, token-level gains alone do not resolve the global exploration--exploitation trade-off: deciding \emph{when} to widen the search or refine a hypothesis still requires dynamic allocation.

\begin{figure}[!t]
    \centering

    \begin{subfigure}{0.48\columnwidth}
        \centering
        \includegraphics[width=\linewidth]{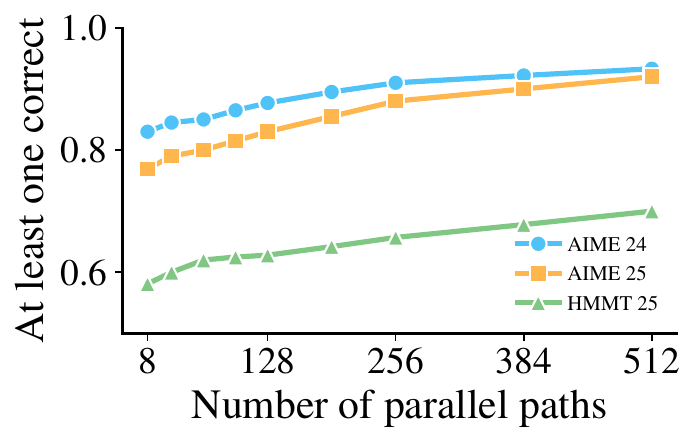}
        \caption{Existence Probability.}
        \label{fig:exist}
    \end{subfigure}
    \hfill
    \begin{subfigure}{0.48\columnwidth}
        \centering
        \includegraphics[width=\linewidth]{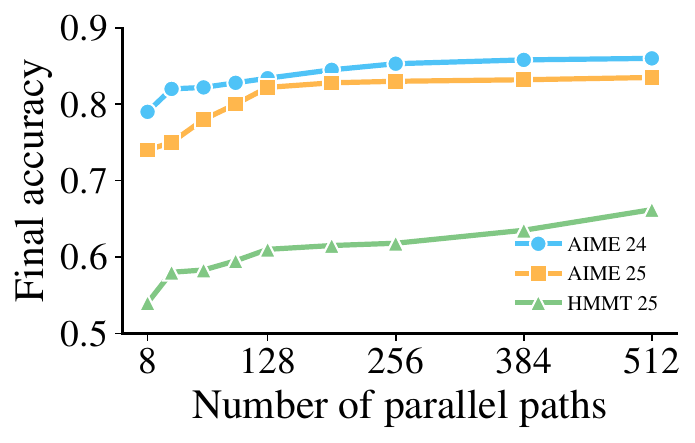}
        \caption{Final Accuracy.}
        \label{fig:acc}
    \end{subfigure}

    \caption{Both the existence probability and the accuracy show an increasing trend as the number of paths increases, yet there is a significant marginal benefit.}
    \label{fig:paths_vs_accuracy}
\end{figure}

Accordingly, we shift the focus from static search schedules to \emph{closed-loop} resource control.
Our design is motivated by three key empirical observations about the \emph{dynamic} nature of this balance:
\begin{enumerate}
    \item \textbf{Phase-dependent utility of exploration.} 
    As shown in Figure~\ref{fig:paths_vs_accuracy}, increasing path width improves coverage but yields diminishing returns on final accuracy. 
    Crucially, this utility is \emph{phase-dependent}: early exploration secures coverage, whereas late-stage redundancy merely consumes budget.
    This indicates that a static schedule is inefficient, creating an opportunity to improve performance by \emph{dynamically reallocating} compute from exploration to exploitation based on the decoding state.
    
    \item \textbf{Correct and incorrect paths often diverge late.}
    Figure~\ref{fig:confidence-divergence} provides dataset-level evidence on AIME25 and HMMT25.
    For each generated path, we normalize its reasoning trajectory by length, compute token confidence at each relative position, and average the resulting curves over correct and incorrect paths across questions.
    The trends show that correct and incorrect paths have similar confidence patterns in early decoding but separate near the tail: correct paths remain relatively stable, whereas incorrect paths degrade.
    This suggests that full-path resampling is often wasteful for correcting late-stage errors; instead, compute can be used more efficiently through \emph{local} refinement around low-confidence tail regions.
    
    \item \textbf{The existence--selection gap.} 
    Even when multi-path decoding succeeds in generating a correct candidate, answer selection remains fragile. 
    As demonstrated in Figure~\ref{fig:failure-cases}, frequent but incorrect answers can dominate naive majority voting---and even confidence-weighted voting---leading to mis-selection even when a correct solution is present.
    This highlights the necessity for \emph{answer-time} exploitation mechanisms that incorporate structural reliability signals beyond simple frequency to bridge the gap between existence and selection.
\end{enumerate}

\begin{figure}[!t]
    \centering
    \includegraphics[width=0.85\columnwidth]{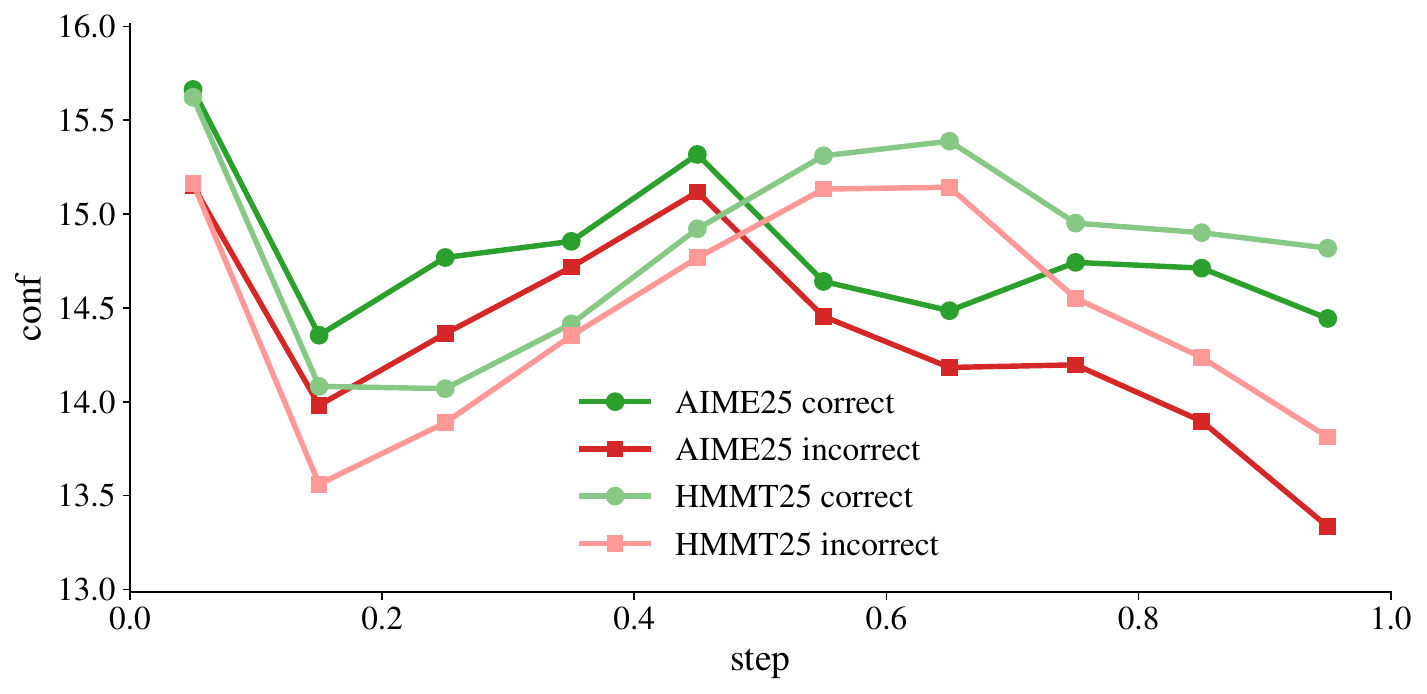}
    \caption{
    Dataset-level confidence divergence between correct and incorrect reasoning paths.
    Each path is normalized by length; token confidence is averaged at each relative position over correct and incorrect paths on AIME25 and HMMT25.
    }
    \label{fig:confidence-divergence}
\end{figure}

\begin{figure}[!t]
    \centering
    \begin{subfigure}[t]{0.48\columnwidth}
        \centering
        \includegraphics[width=\linewidth]{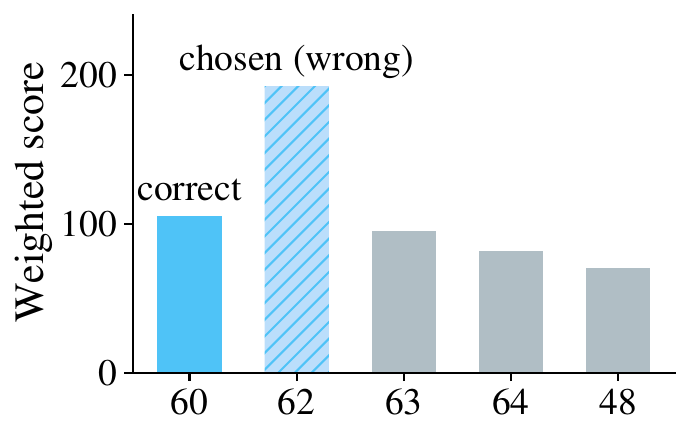}
        \caption{AIME25 Q13.}
        \label{fig:failure-cases-a}
    \end{subfigure}
    \hfill
    \begin{subfigure}[t]{0.48\columnwidth}
        \centering
        \includegraphics[width=\linewidth]{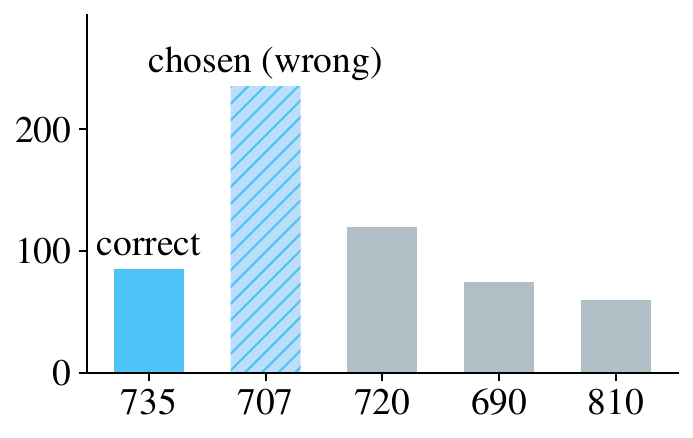}

        \caption{AIME25 Q14.}
        \label{fig:failure-cases-b}
    \end{subfigure}
    \caption{Correct paths are present but outvoted by noisy paths under confidence-weighted voting: each path’s answer receives a weight given by the path’s global average token confidence.}
    \label{fig:failure-cases}

\end{figure}

We therefore introduce \emph{HyPER} (\textbf{Hy}pothesis \textbf{P}ath \textbf{E}xpansion and \textbf{R}eduction), a \emph{training-free online control policy} that reconceptualizes test-time scaling as a dynamic \emph{expand--reduce control problem} over a pool of hypothesis paths.
Instead of following a rigid schedule, HyPER employs a closed-loop controller that monitors lightweight real-time path statistics to reallocate compute budget on the fly.

HyPER makes three core contributions:
(i) An \emph{online expand--reduce controller} that coordinates exploration and exploitation by dynamically selecting among path branching (\emph{Branch}), short-horizon exploration (\emph{MultiToken}), and token-level refinement (\emph{SingleToken}) based on the pool's diversity and confidence state.
(ii) An efficient \emph{single-token aggregation primitive} that leverages \emph{MoE} routing diversity to produce and aggregate complementary token-level hypotheses, enabling strong generation-time exploitation without full-path resampling.
(iii) A \emph{length- and confidence-aware voting rule} that addresses the existence--selection gap by exploiting the pruning-induced length bias for reliable answer selection.
HyPER requires no fine-tuning and integrates seamlessly with post-trained reasoning models.

Empirically, across four MoE LLMs and a diverse set of reasoning benchmarks, HyPER improves accuracy by an average of $8$--$10$ percentage points over existing test-time scaling baselines, while reducing token usage by approximately $25$--$40\%$ under comparable compute budgets.
\section{Background}
\label{sec:related}

\paragraph{Paradigms of test-time scaling.}
LLM reasoning increasingly benefits from additional \emph{test-time compute}~\citep{snell2024scaling,welleck2024decoding}.
Across both ``depth'' (longer internal deliberation within a single CoT)~\citep{jaech2024o1,guo2025deepseekr1,kimiteam2025kimik15scalingreinforcement,grok4,muennighoff2025s1,ye2025limo}
and ``breadth'' (searching over multiple hypotheses), test-time scaling can be viewed as \emph{inference-time search} that must trade off exploration and exploitation under finite budgets.
Concretely, methods differ by (i) whether they perform \emph{adaptive control} over exploration/exploitation online or follow a rigid schedule, (ii) what \emph{exploitation primitives} they employ: pure final \emph{answer evaluation} (e.g., voting/selection), confidence-based \emph{pruning} to reduce wasted exploration, or finer-grained \emph{token-level refinement}, and (iii) whether guidance comes from the \emph{policy} model alone (the base generator used to expand paths) or additionally from a learned \emph{reward} model that assigns step- or answer-level scores.

Existing approaches mainly fall into two paradigms.
\emph{Structured tree-based search} methods explicitly expand intermediate reasoning states and rely on step-level scoring or external supervision (e.g., ToT, MCTS, ETS)~\citep{yao2023tree,xie2024monte,snell2024scaling,ets}.
Such methods instantiate exploration via branching and exploitation via score-guided selection, but typically depend on an auxiliary verifier (and thus require additional training and can be rigid and brittle for post-trained reasoning models (Appendix ~\ref{app:tot-thinking}).
In contrast, \emph{parallel reasoning} methods sample multiple complete trajectories and aggregate only at the end, as in Self-Consistency and Best-of-$N$~\citep{wang2022self,brown2024monkeys,irvine2023reward}. They preserve native generation behavior and are often policy-only (no verifier), but tend to over-explore redundant paths and under-utilize promising hypotheses during generation.
Recent \emph{adaptive variants} allocate test-time compute online: some focus on spawn-join execution~\citep{lian2025threadweaver}, while others, such as DeepConf~\citep{fu2025deepconf}, introduce training-free confidence-guided pruning to adapt the active pool under a fixed budget.
However, many such variants are primarily subtractive (prune-heavy): they lack the ability to exploit dedicated hypothesis and thus, rely on extending the number of initial hypotheses to improve task accuracy.

\begin{table}[t]
\centering
\caption{TTS methods as checklists. Adap ctrl: online expand/refine decisions. Answer eval: aggregate and select the answer over multi-paths.}
\label{tab:tts-checklist}
\tiny
\setlength{\tabcolsep}{0.1pt}
\renewcommand{\arraystretch}{1.10}
\resizebox{0.99\columnwidth}{!}{
\begin{tabular*}{\columnwidth}{@{\extracolsep{\fill}} l c cc cc c cc}
\toprule
\multirow{2}{*}{\textbf{Method}} & \multirow{2}{*}{\textbf{Gran.}} &
\multicolumn{2}{c}{\textbf{Exploration}} &
\multicolumn{3}{c}{\textbf{Exploitation}} &
\multicolumn{2}{c}{\textbf{Train-free}} \\
\cmidrule(lr){3-4}\cmidrule(lr){5-7}\cmidrule(lr){8-9}
& &
\shdr{Adap ctrl} & \shdr{No rigid search} &
\shdr{Answer eval} & \shdr{Prune} & \shdr{Token refine} &
\shdr{Policy model} & \shdr{Reward model} \\
\midrule

\multicolumn{9}{l}{\emph{Tree-based search schemes}} \\
ToT~\citep{yao2023tree}
& step  & \xmark & \xmark & \cmark & \xmark & \xmark & \cmark & \xmark \\
MCTS~\citep{xie2024monte}
& step  & \cmark & \xmark & \cmark & \xmark & \xmark & \cmark & \xmark \\
ETS~\citep{ets}
& step  & \cmark & \xmark & \cmark & \cmark & \xmark & \xmark & \xmark \\
\midrule

\multicolumn{9}{l}{\emph{Parallel reasoning}} \\
SC~\citep{wang2022self}
& path  & \xmark & \cmark & \cmark & \xmark & \xmark & \cmark & \xmark \\
BoN~\citep{brown2024monkeys}
& path  & \xmark & \cmark & \cmark & \xmark & \xmark & \cmark & \xmark \\
\midrule

\multicolumn{9}{l}{\emph{Adaptive reasoning}} \\
DeepConf~\citep{fu2025deepconf}
& path  & \cmark & \cmark & \cmark & \cmark & \xmark & \cmark & \cmark \\
Thread~\citep{lian2025threadweaver}
& step  & \cmark & \cmark & \cmark & \xmark & \xmark & \xmark & \cmark \\
\midrule

\multicolumn{9}{l}{\emph{No multi-paths}} \\
RoE~\citep{roe_hyper_parallel}
& token & \xmark & \cmark & \xmark & \xmark & \cmark & \cmark & \cmark \\
\midrule

\multicolumn{9}{l}{\emph{Expand+reduce}} \\
\textbf{HyPER (ours)}
& token & \cmark & \cmark & \cmark & \cmark & \cmark & \cmark & \cmark \\
\bottomrule
\end{tabular*}
}
\end{table}

\paragraph{Token-level test-time scaling in MoE.}
While standard scaling methods operate at the \emph{path level}, Mixture-of-experts (MoE) architectures provide a latent dimension for \emph{token-level} scaling. By routing tokens to sparse expert subsets, MoEs inherently contain a diverse ensemble of sub-models.
Recent work, such as RoE~\citep{roe_hyper_parallel}, exploits this by injecting Gumbel noise into expert routing to generate multiple expert proposals per token and aggregating them into a single prediction, effectively turning an MoE into a dynamic ensemble at inference time.
However, existing token-level methods operate in isolation: they are typically ``always-on'' and agnostic to the broader reasoning state.
They operate in isolation, blind to the trade-off between refining the current token and exploring new paths. This leaves an open challenge: \emph{how to dynamically allocate a unified budget between token-level refinement and path-level expansion.}

For context, Table~\ref{tab:tts-checklist} summarizes representative test-time scaling methods by decision granularity and how they instantiate exploration and exploitation.

\section{HyPER Design}

\label{sec:method}
Figure~\ref{fig:method-pipeline} provides an overview of our HyPER framework.
HyPER instantiates test-time scaling as \emph{online expand--reduce control} over a pool of hypothesis paths, and couples three components to balance exploration and exploitation under a fixed budget: (i) a lightweight controller that periodically selects among branching, multi-token aggregation, and single-token aggregation actions during decoding (\S\ref{sec:online-control}); (ii) a token-level refinement primitive (\emph{single-token aggregation}) that exploits MoE routing diversity without resampling full paths (\S\ref{sec:single-token}); and 
(iii) a length-aware, confidence-guided voting rule that strengthens answer-time exploitation (\S\ref{sec:voting}). Throughout decoding, an always-on confidence pruning filter continuously removes low-quality paths, while the controller acts only on the remaining survivors.

\paragraph{Online signals for confidence and diversity.}
Effective online control relies on lightweight, training-free signals to estimate the quality and diversity of the surviving path pool $\mathcal{S}_t$.
We choose signals that are (i) inexpensive to compute online, (ii) monotonic with respect to path reliability or diversity, and (iii) already used or validated in prior test-time scaling work.

\emph{Confidence signals.}
We track mean token confidence $\bar C_t$, mean token entropy $H_t$ as an uncertainty proxy (via a top-$k$ approximation), and top-1 consensus $\beta_t$, defined as the vote share of the most common next token across paths, following DeepConf~\citep{fu2025deepconf}.

\emph{Diversity signals.}
To detect path collapse and trigger exploration, we combine two complementary similarity measures commonly used for LLM outputs: distribution-level divergence $D_{\text{dist}}$ and suffix-level edit distance $D_{\text{seq}}$~\citep{xia-etal-2025-threat,zheng2025kcrresolvinglongcontextknowledge}.
They are aggregated into a single diversity score
$
D_t = \eta\,D_{\text{dist},t} + (1-\eta)\,D_{\text{seq},t}
$
, where low $D_t$ indicates collapsing trajectories and favors branching or multi-token aggregation, while higher $D_t$ suggests sufficient coverage to prioritize refinement.
Complete metric definitions and computation details are deferred to Appendix~\ref{app:metrics}.

\subsection{Online Path Expansion and Reduction}
\label{sec:online-control}

To operationalize the phase-dependent exploration--exploitation trade-off (Observation 1), we rely on the insight that decoding exposes evolving information through \emph{lightweight online signals}.
Inexpensive metrics like confidence and diversity serve as effective, training-free proxies for the pool's state—signaling when to trigger exploration (e.g., upon diversity collapse) or exploitation (e.g., under high uncertainty).
Accordingly, we maintain a pool of \emph{surviving} paths $\mathcal{S}_t$ subject to always-on confidence pruning.
Every $T$ decoding steps, the controller evaluates these statistics and outputs an action $a_t \in \{\textsc{None},\textsc{SingleToken},\textsc{MultiToken},\textsc{Branch}\}$ to apply to all survivors. The complete HyPER decoding pipeline is detailed in Algorithm ~\ref{alg:hyper-pipeline} (Appendix ~\ref{app:pipeline}).

\subsubsection{Runtime signals for adaptive control}
At each decision point $t$, we compute confidence-group signals $\{\bar{C}_t, H_t, \beta_t\}$ (mean confidence, mean top-$k$ entropy with $k=8$, and top-1 consensus; following DeepConf) and a diversity signal $D_t$ that measures whether paths are collapsing (Section~\ref{sec:related}).
We map all statistics to $[0,1]$ via global-max normalization estimated during warm-up: in formulas below we use $\min(X_t/X_{\max},1)$, where $X_{\max}$ is the maximum observed value of $X_t$ in warm-up (Section~\ref{sec:experiments}).
Although the diversity signals involve pairwise comparisons with $O(|S_t|^2)$ complexity, the survivor pool is capped by $S_{\max}=80$ and the controller acts only every $T$ steps, making the amortized overhead negligible in practice.

\subsubsection{Action set}
We consider four actions
$a_t \in \{\textsc{None},\allowbreak \textsc{SingleToken},\allowbreak
\textsc{MultiToken},\allowbreak \textsc{Branch}\}$.
At decision step $t$, let $S_t = |\mathcal{S}_t|$ be the current number of surviving paths.
To maintain effective parallelism as pruning shrinks the pool, we set the per-path expansion factor to
$r_t=\left\lceil W/S_t \right\rceil$, where $W$ is the target (initial/budgeted) pool width.
Thus, each surviving path expands to roughly $r_t$ parallel continuations (subject to a global width cap when needed).

\begin{figure}[t]
    \centering
    \includegraphics[width=\linewidth]{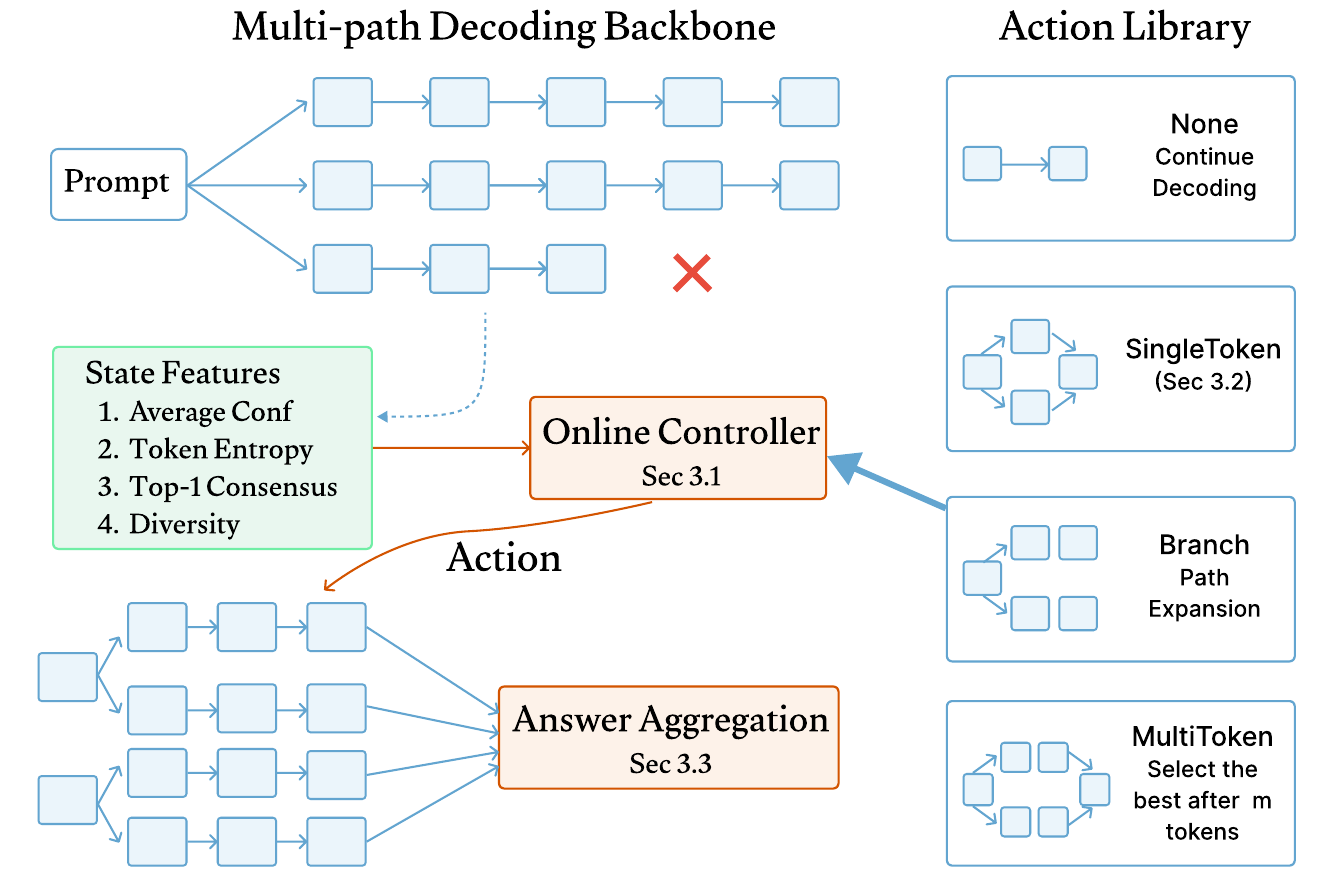}
    \caption{
        Overview of HyPER.
    }
    \label{fig:method-pipeline}
\end{figure}

\paragraph{\textsc{None} (continue standard decoding).}
\textsc{None} performs no additional operation: we continue decoding all active paths.
Intuitively, this action is preferred when the pool is already reliable (high $\bar{C}_t$), confident (low $H_t$), and maintains sufficient coverage (high $D_t$), making additional intervention unnecessary.

\paragraph{\textsc{SingleToken} (single-token expand-and-aggregate; exploitation).}
\textsc{SingleToken} applies a token-level expand--reduce operation (Sec.~\ref{sec:single-token}) to each active path, refining the next-token decision without changing the number of paths.
This action targets \emph{test-time} exploitation and activates under local instability (low confidence, high entropy, low consensus), where token-level refinement is most effective (see Figure~\ref{fig:multi-action}).

\paragraph{\textsc{Branch} (pure branching; exploration).}
\textsc{Branch} explicitly widens the pool: each surviving path forks into $r_t$ child paths that continue decoding independently.
This action corresponds to pure exploration and triggers when diversity is insufficient (low $D_t$) and paths disagree (low consensus), increasing coverage of the hypothesis space (Figure~\ref{fig:multi-action}).

\paragraph{\textsc{MultiToken} ($m$-token expand-and-aggregate; hybrid).}
\textsc{MultiToken} applies short-horizon expansion followed by aggregation (we fix $m{=}T$).
Starting from each surviving root path, we spawn $r_t$ child continuations and decode them for the next $m$ tokens.
We then \emph{aggregate back per root}: among the $r_t$ children sharing the same root, we keep only the child with the highest window-level confidence over these $m$ steps, discard the others, and release their KV caches.
As reflected in Figure~\ref{fig:multi-action}, this action plays a hybrid role: it activates when diversity is low but the pool exhibits strong confidence dispersion (high $\Var(C)_t$), making short-horizon expand--reduce effective.

\subsubsection{Action selection}
Note that the coefficients in these equations are simple averaging weights, reflecting our design philosophy that coarse, lightweight signals are sufficient for effective control without learning complex policies.
At decision step $t$, the controller computes one score per action and selects the highest-scoring action.
We briefly write the normalized statistics as
$\widehat{X}_t = \min(X_t/X_{\max},1)$.

{\footnotesize
\setlength{\abovedisplayskip}{1pt}
\setlength{\belowdisplayskip}{1pt}
\begin{align}
\mathrm{Score}_{\textsc{None}}(t)
&=
\frac{1}{3}\Big(
\widehat{\bar{C}}_t
+
(1-\widehat{H}_t)
+
\widehat{D}_t
\Big),
\label{eq:score-none}
\\[1pt]
\mathrm{Score}_{\textsc{SingleToken}}(t)
&=
\frac{1}{3}\Big(
(1-\widehat{\bar{C}}_t)
+
\widehat{H}_t
+
(1-\beta_t)
\Big),
\label{eq:score-single}
\\[1pt]
\mathrm{Score}_{\textsc{Branch}}(t)
&=
\frac{1}{2}\Big(
(1-\widehat{D}_t)
+
(1-\beta_t)
\Big),
\label{eq:score-branch}
\\[1pt]
\mathrm{Score}_{\textsc{MultiToken}}(t)
&=
\frac{1}{3}\Big(
(1-\widehat{D}_t)
+
(1-\widehat{\bar{C}}_t)
+
\widehat{\Var(C)}_t
\Big),
\label{eq:score-multi}
\end{align}
}

The coefficients (e.g., $1/3$) are simply normalization factors $1/n$ for the $n$ signals per action, not tuned hyperparameters.
This minimalist averaging ensures all signals contribute equally without dataset-specific tuning.
As validated in Figure~\ref{fig:controller-behavior}, this simple logic successfully captures the phase-dependent shift from exploration to exploitation.


\begin{figure}[t]
  \centering
  \includegraphics[
        width=\linewidth,
  ]{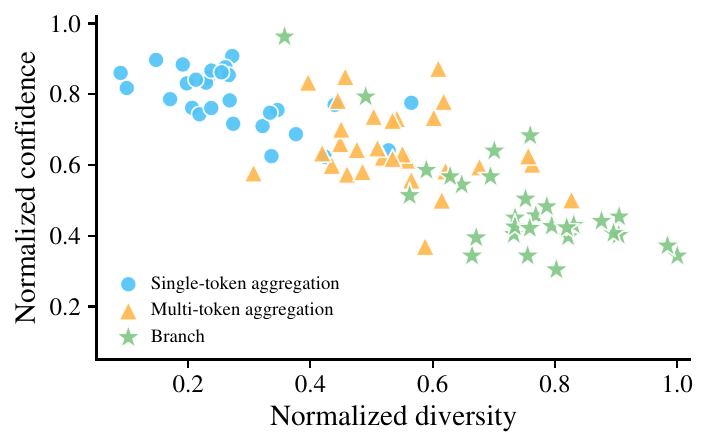}
  \caption{Per-instance confidence--diversity scatter plots under isolated actions.}
  \label{fig:multi-action}
\end{figure}

\subsection{Single-Token Expand-and-Aggregate}
\label{sec:single-token}

Addressing the finding that correct and incorrect paths often diverge late and are best separated by tail confidence (Observation 2), this module targets efficient \emph{test-time exploitation} without the overhead of resampling full paths.
We instantiate \textsc{SingleToken} as a token-level \emph{expand--reduce} operator that leverages the latent routing diversity of MoE models.
For the current token, we \emph{expand} the computation into $K=r_t=\left\lceil W/S_t \right\rceil$ stochastic expert-routed proposals and then \emph{reduce} them into a single refined logit vector via a reuse-aware two-pass sampler and confidence-weighted aggregation (Figure~\ref{fig:single-token-two-pass}).
Crucially, this mechanism stabilizes the tail confidence needed for correctness, while enforcing \emph{intra-token diversity} to prevent refinement from collapsing path-level exploration.

\paragraph{Step 1: Gumbel-noise route sampling.}
To initiate the expansion, we first perturb the MoE router logits $\mathbf{r}\in\mathbb{R}^{E}$ for the current token.
We draw $K$ Gumbel noise vectors $\mathbf{g}^{(k)}\in\mathbb{R}^{E}$ with $g^{(k)}_e \sim \mathrm{Gumbel}(0,1)$, and construct a dense score matrix $S^{(0)} \in \mathbb{R}^{E \times K}$ where:
{\footnotesize
\setlength{\abovedisplayskip}{4pt}
\setlength{\belowdisplayskip}{4pt}
\begin{equation}
\label{eq:gumbel-score-matrix}
S^{(0)}_{:,k} \;=\; \mathbf{r} + \tau\,\mathbf{g}^{(k)}, \qquad k\in\{0,\dots,K-1\},
\end{equation}
}
with $\tau=0.5$ controlling the exploration strength.

\paragraph{Step 2: Two-pass expert sampling.}
To enforce intra-token diversity, we feed $S^{(0)}$ into a fully vectorized two-pass sampling scheme rather than selecting experts independently.
In the first pass, we apply standard top-$m$ routing (where $m$ is the backbone's fixed gate size) to each column of $S^{(0)}$, yielding a sparse matrix $S$.
To discourage expert collisions across the $K$ routes, we build a penalty matrix $\Delta \in \mathbb{R}^{E \times K}$ via a \emph{column-wise} prefix accumulation.
Setting the first column to zero ($\Delta_{:,0}=\mathbf{0}$), subsequent columns aggregate prior expert usage via $\Delta_{:,k}=\phi\!\bigl(\sum_{t=0}^{k-1} S_{:,t}\bigr)$ using a smooth squashing function $\phi(\cdot)$ (e.g., $\tanh$).
In the second pass, we reuse the original dense scores and apply this deterministic penalty: $\widetilde{S}^{(2)} = S^{(0)} - \lambda\,\Delta$. A final top-$m$ routing on $\widetilde{S}^{(2)}$ yields the diversified expert selections.
Pseudo-code are provided in Appendix~\ref{app:PROE}.

\begin{figure}[t]
  \centering
      \includegraphics[
        width=\linewidth,
    ]{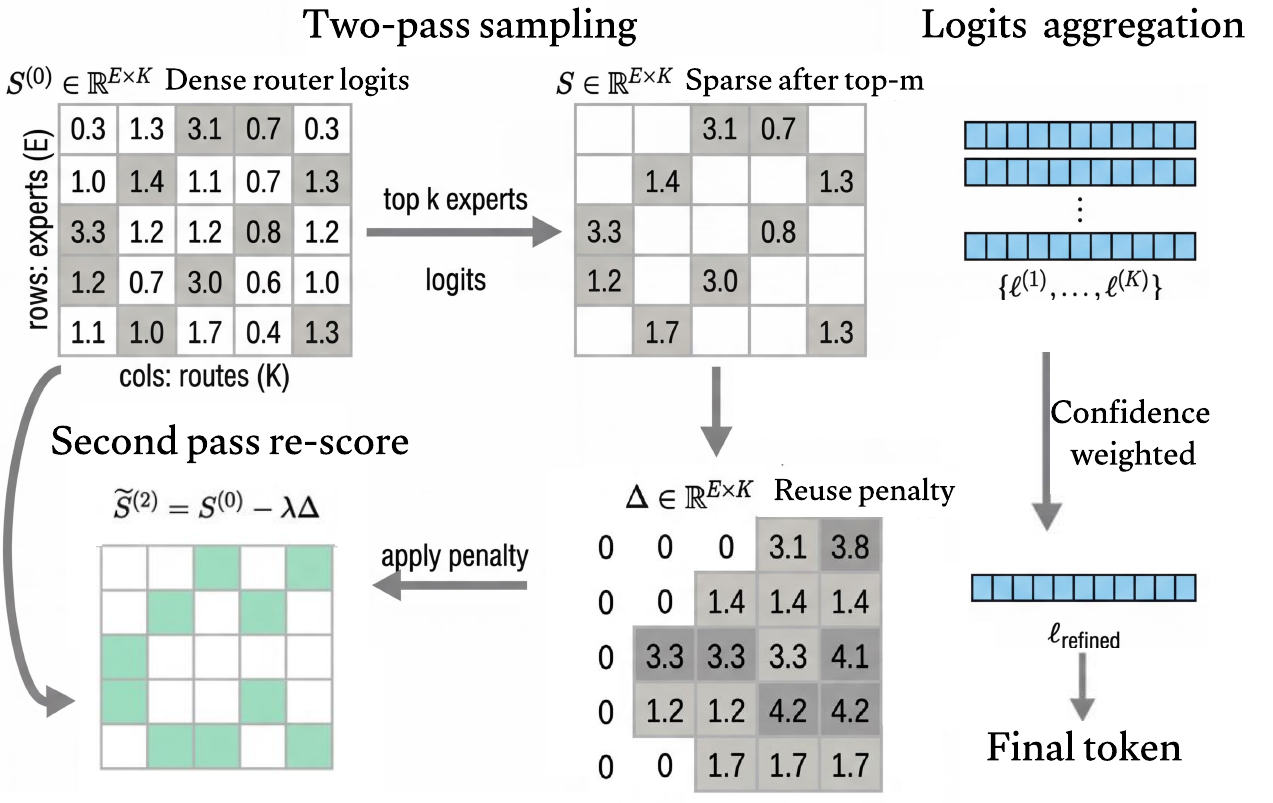}
  \caption{Two-pass expert sampling for single-token aggregation.}
  \label{fig:single-token-two-pass}
\end{figure}

\paragraph{Step 3: Confidence-weighted logit aggregation.}
Once the $K$ diverse expert routes are forwarded, we obtain $K$ candidate next-token logit vectors.
To reduce them into a single reliable prediction, we apply confidence-weighted aggregation.
Each route inherits a \emph{token confidence score} $C_t$ derived from its predictive distribution (Equation~\ref{eq:token-conf}), where higher $C_t$ indicates lower uncertainty.
We convert these scores into smoothed weights to softly combine the candidate logits, amplifying confident routes while reducing variance and preventing the diversity penalty from degrading generation quality. While involving multiple passes, these vectorized matrix operations impose negligible latency compared to generating a new token. By forcing local diversity without full-path resampling, \textsc{SingleToken} offers a high-efficiency exploitation primitive.

\paragraph{Memory optimization via KV cache sharing.}
Computationally, naive single-token expansion would multiply memory costs by $K$.
To avoid this, we share the prefix KV states across the $K$ routed proposals and localize stochastic routing only to the current token. Thus, KV memory scales strictly with the number of surviving paths (capped by $S_{\max}$) rather than with $K$~\citep{roe_hyper_parallel}. Details are provided in Appendix~\ref{app:kv-cache}.

\subsection{Answer Aggregation with Length- and Confidence-Aware Voting}
\label{sec:voting}

To address the \emph{Existence–Selection Gap} (Observation 3), where having at least one correct path does not guarantee the correct final answer. We identify a crucial structural signal for \emph{answer-time exploitation}.
As shown in Figure~\ref{fig:obs2-length-bias} and further analyzed in Appendix~\ref{app:length-uni}, \textbf{confidence pruning inverts the typical length bias}: \emph{surviving correct paths become systematically longer} than incorrect ones, as erroneous reasoning is frequently exposed by low-confidence steps and terminated early.

Leveraging this asymmetry, we collect all surviving and pruned paths that produced a scalar answer.
Let $\mathcal{P}$ denote this set; for each path $p\in\mathcal{P}$, let $a_p$ be its answer, $L_p$ its length, and $\hat{c}_p$ its global average confidence.
We emphasize that length serves as a robust signal \emph{only} when coupled with confidence pruning, rather than as a universally valid metric (verified in Appendix~\ref{app:conf_vs_length_ablation}).
We first restrict attention to the top-$K_a$ answers by majority count, and then select the final answer by aggregating normalized length and confidence over supporting paths:
{\footnotesize
\setlength{\abovedisplayskip}{2pt}
\setlength{\belowdisplayskip}{2pt}
\begin{equation}
\label{eq:hyper-marginalize-full}
\begin{aligned}
\hat{a}
&=
\arg\max_{a \in \mathcal{A}_K}
\sum_{p \in \mathcal{P}:\, a_p = a}
\left(
\lambda_{\mathrm{len}}\,\frac{L_p}{\sum_{q \in \mathcal{P}} L_q}
+
\lambda_{\mathrm{conf}}\,\frac{\hat{c}_p}{\max_{q \in \mathcal{P}} \hat{c}_q}
\right), \\
\mathcal{A}_K
&=
\operatorname{TopK}_{a}\!\left(
\sum_{p\in\mathcal{P}} \mathbf{1}[a_p=a]
\right).
\end{aligned}
\end{equation}
}

Here $\lambda_{\mathrm{len}}$ and $\lambda_{\mathrm{conf}}$ control the relative contributions of rationale length and confidence.
This rule is compatible with confidence pruning and expand--reduce control, and admits a Bayes-optimal interpretation under a simple generative model (Appendix~\ref{app:length-conf-theory}). We also empirically verify robustness against \emph{long but incorrect} paths in Appendix~\ref{app:length_anal}.

\begin{figure}[t]
    \centering
\includegraphics[width=0.9\linewidth]{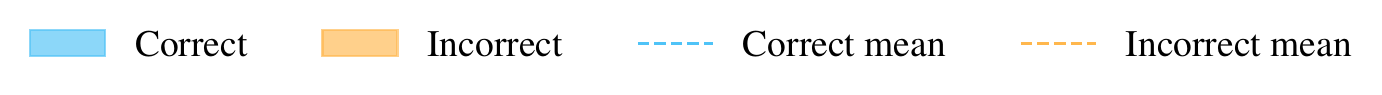}

    \begin{subfigure}[t]{0.48\columnwidth}
        \centering
        \includegraphics[width=\linewidth]{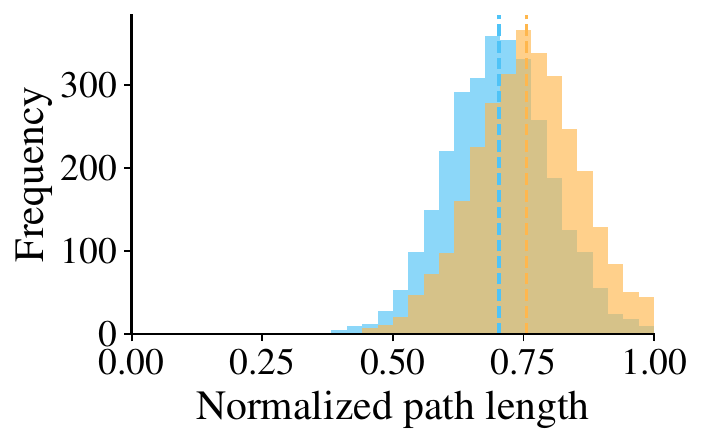}
        \caption{Before pruning.}
        \label{fig:obs2-length-bias-a}
    \end{subfigure}
    \hfill
    \begin{subfigure}[t]{0.48\columnwidth}
        \centering
        \includegraphics[width=\linewidth]{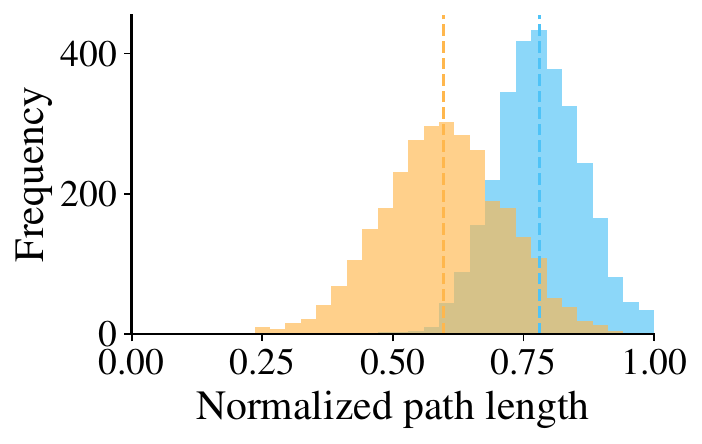}
        \caption{After pruning.}
        \label{fig:obs2-length-bias-b}
    \end{subfigure}

    \caption{Length asymmetry induced by confidence pruning.}
    \label{fig:obs2-length-bias}
\end{figure}

\section{Experiments}
\label{sec:experiments}

\subsection{Experimental Setup}
\label{sec:experiments}

\textbf{Decoding, budget, and accounting.}
All methods use the same prompts, answer extraction, SC-style multi-path decoding backend, and sampling hyperparameters, with an initial/concurrent path cap of $S_{\max}=80$ unless stated otherwise.
We apply a DeepConf-style warm-up pruning scheme: draw $N_{\text{init}}=16$ initial SC traces, estimate a sliding-window confidence threshold from the top-10 traces, and terminate any path whose confidence falls below the threshold thereafter~\citep{fu2025deepconf}.

\emph{Budget alignment.}
Comparing methods at the same initial width would undercharge HyPER, because \textsc{Branch} and \textsc{MultiToken} instantiate transient continuations beyond the surviving path pool.
We therefore align methods by their \emph{Total Instantiated Path Count} ($N_{\text{inst}}$), which counts all paths instantiated during decoding, including transient branches.
Since HyPER typically yields $N_{\text{inst}}\approx1.5S_{\max}$, we run SC and DeepConf with an equivalently enlarged initial width ($1.5S_{\max}$) to match the effective path budget.
We report token cost normalized by the standard SC-80 baseline ($1.0\times$).
For RoE and \textsc{SingleToken}, each expert-routed proposal is charged as $K$ effective token expansions.
In the Pareto analysis (Figure~\ref{fig:pareto-hmmt25}), we sweep $S_{\max}\in\{32,64,80,128,256,512\}$ while maintaining the same effective-budget accounting.
Thus, improvements cannot be attributed to simply using more sampled paths or uncharged expert proposals.

Unless stated otherwise, we use $\eta=0.4$ for the diversity mixture, controller interval $T=64$, \textsc{SingleToken} reuse penalty $\lambda=0.1$, and answer aggregation weights $(\lambda_{\text{len}},\lambda_{\text{conf}})=(0.6,0.4)$.
Appendix~\ref{app:hyperparam-sens} provides sensitivity analyses for these hyperparameters.

\textbf{Benchmarks and models.}
We evaluate HyPER on two tiers of MoE reasoning settings.
For hard reasoning, we test \textbf{Qwen3-30B-A3B-Thinking-2507} and \textbf{Qwen3-Next-80B-A3B-Thinking}~\citep{qwen3technicalreport} on AIME24/25~\citep{AIME2024,AIME2025}, HMMT25~\citep{hmmt_2025}, and HLE~\citep{phan2025humanity}, using official scoring.
For light reasoning, we test \textbf{OLMoE-1B-7B-0924-Instruct}~\citep{muennighoff2025olmoeopenmixtureofexpertslanguage} and \textbf{DeepSeek-V2-Lite-Chat}~\citep{deepseekai2024deepseekv2strongeconomicalefficient} on Math500~\citep{hendrycks2021measuringmathematicalproblemsolving}, GSM8K~\citep{cobbe2021trainingverifierssolvemath}, ARC-C, and ARC-E~\citep{clark2018thinksolvedquestionanswering}.
Baselines include SC, Self-Certainty~\citep{kang2025scalable}, DeepConf~\citep{fu2025deepconf}, and late-stage RoE~\citep{roe_hyper_parallel}, all under the same prompting and decoding setup.

\begin{table*}[t]
\centering
\footnotesize
\setlength{\tabcolsep}{1.5pt}
\renewcommand{\arraystretch}{1.05}
\caption{
Token cost and accuracy across models and datasets.
\textbf{Token} is normalized by \textbf{SC} within each (model, dataset).
}
\begin{tabular}{ll|cc|cc|cc|cc|cc}
\toprule
\textbf{Model} & \textbf{Dataset}
& \multicolumn{2}{c|}{\textbf{SC}}
& \multicolumn{2}{c|}{\textbf{Self-Certainty}}
& \multicolumn{2}{c|}{\textbf{DeepConf}}
& \multicolumn{2}{c|}{\textbf{RoE}}
& \multicolumn{2}{c}{\textbf{HyPER}} \\
\cmidrule(lr){3-4}
\cmidrule(lr){5-6}
\cmidrule(lr){7-8}
\cmidrule(lr){9-10}
\cmidrule(lr){11-12}
& & \textbf{Token} & \textbf{Acc}
  & \textbf{Token} & \textbf{Acc}
  & \textbf{Token} & \textbf{Acc}
  & \textbf{Token} & \textbf{Acc}
  & \textbf{Token} & \textbf{Acc} \\
\midrule

\multirow{4}{*}{\textbf{Qwen3-30B}}
& AIME24
& 1.00 & $88.0 \;(\pm\;2.0)$
& 0.94 & $83.3 \;(\pm\;3.3)$
& 0.46 & $92.7 \;(\pm\;2.7)$
& 1.38 & $88.7 \;(\pm\;2.1)$
& 0.54 & $\mathbf{94.0} \;(\pm\;4.0)$ \\
& AIME25
& 1.00 & $86.0 \;(\pm\;2.7)$
& 1.13 & $80.0 \;(\pm\;6.7)$
& 0.67 & $90.7 \;(\pm\;2.6)$
& 1.64 & $84.7 \;(\pm\;4.7)$
& 0.71 & $\mathbf{95.3} \;(\pm\;2.0)$ \\
& HMMT25
& 1.00 & $69.4 \;(\pm\;2.7)$
& 1.03 & $68.7 \;(\pm\;2.1)$
& 0.84 & $74.0 \;(\pm\;2.7)$
& 1.78 & $71.3 \;(\pm\;2.0)$
& 0.77 & $\mathbf{78.7} \;(\pm\;6.4)$ \\
& HLE
& 1.00 & $6.5 \;(\pm\;0.5)$
& 1.15 & $2.5 \;(\pm\;2.5)$
& 0.91 & $11.5 \;(\pm\;0)$
& 3.55 & $7.0 \;(\pm\;0.5)$
& 0.81 & $\mathbf{13.5} \;(\pm\;1.5)$ \\
\midrule

\multirow{4}{*}{\textbf{Qwen3-Next}}
& AIME24
& 1.00 & $90.7 \;(\pm\;1.6)$
& 1.06 & $87.7 \;(\pm\;2.3)$
& 0.47 & $94.7 \;(\pm\;2.0)$
& 1.59 & $92.0 \;(\pm\;2.0)$
& 0.61 & $\mathbf{97.3} \;(\pm\;0.6)$ \\
& AIME25
& 1.00 & $88.0 \;(\pm\;2.0)$
& 0.95 & $83.3 \;(\pm\;3.3)$
& 0.53 & $94.0 \;(\pm\;2.7)$
& 1.46 & $86.7 \;(\pm\;3.4)$
& 0.59 & $\mathbf{96.0} \;(\pm\;2.7)$ \\
& HMMT25
& 1.00 & $76.7 \;(\pm\;3.4)$
& 1.00 & $70.0 \;(\pm\;0)$
& 0.76 & $80.7 \;(\pm\;6.0)$
& 1.76 & $78.0 \;(\pm\;5.3)$
& 0.74 & $\mathbf{85.3} \;(\pm\;5.3)$ \\
& HLE
& 1.00 & $7.0 \;(\pm\;0)$
& 1.25 & $2.5 \;(\pm\;0)$
& 0.84 & $11.0 \;(\pm\;0)$
& 2.98 & $6.5 \;(\pm\;0.5)$
& 0.76 & $\mathbf{15.5} \;(\pm\;2.0)$ \\
\midrule

\multirow{4}{*}{\textbf{OLMoE}}
& GSM8K        
& 1.00 & 75.4 \;(\,$\pm$\,0.6) 
& 1.20 & 76.1 \;(\,$\pm$\,1.3) 
& 0.41 & 77.6 \;(\,$\pm$\,0.3) 
& 1.64 & 76.3 \;(\,$\pm$\,0.7) 
& 0.48 & $\mathbf{80.3} \; (\pm\,0.8)$ \\
& MATH500       
& 1.00 & 37.3 \;(\,$\pm$\,1.1) 
& 1.09 & 37.9 \;(\,$\pm$\,0.7) 
& 0.55 & 43.7 \;(\,$\pm$\,1.3) 
& 2.35 & 38.2 \;(\,$\pm$\,2.6) 
& 0.73 & $\mathbf{46.7} \; (\pm\,1.5)$ \\
& ARC-C 
& 1.00 & 63.3 \;(\,$\pm$\,0.3) 
& 0.76 & 66.4 \;(\,$\pm$\,0.6) 
& 0.32 & 68.7 \;(\,$\pm$\,0.5) 
& 1.67 & 65.2 \;(\,$\pm$\,1.3) 
& 0.68 & $\mathbf{70.1} \; (\pm\,0.9)$ \\
& ARC-E      
& 1.00 & 80.1 \;(\,$\pm$\,1.2) 
& 0.88 & 82.2 \;(\,$\pm$\,0.8) 
& 0.25 & 84.7 \;(\,$\pm$\,0.7) 
& 2.04 & 83.8 \;(\,$\pm$\,1.5) 
& 0.36 & $\mathbf{86.5} \; (\pm\,0.9)$ \\
\midrule

\multirow{4}{*}{\textbf{DeepSeek-V2}}
& GSM8K         
& 1.00 & 80.3 \;(\,$\pm$\,1.3) 
& 1.00 & 81.7 \;(\,$\pm$\,2.4) 
& 0.32 & 83.3 \;(\,$\pm$\,0.9) 
& 2.31 & 81.2 \;(\,$\pm$\,0.2) 
& 0.64 & $\mathbf{85.5} \; (\pm\,0.3)$ \\
& MATH500      
& 1.00 & 37.8 \;(\,$\pm$\,0.5) 
& 0.75 & 38.3 \;(\,$\pm$\,0.4) 
& 0.65 & 40.2 \;(\,$\pm$\,0.8) 
& 1.93 & 37.5 \;(\,$\pm$\,2.3) 
& 0.94 & $\mathbf{40.8} \; (\pm\,3.3)$ \\
& ARC-C 
& 1.00 & 65.2 \;(\,$\pm$\,0.7) 
& 1.43 & 64.7 \;(\,$\pm$\,3.2) 
& 0.53 & 66.2 \;(\,$\pm$\,0.8) 
& 1.65 & 63.7 \;(\,$\pm$\,2.2) 
& 0.74 & $\mathbf{66.2} \; (\pm\,2.4)$ \\
& ARC-E      
& 1.00 & 84.3 \;(\,$\pm$\,3.1) 
& 1.04 & 84.5 \;(\,$\pm$\,1.3) 
& 0.31 & 86.2 \;(\,$\pm$\,0.9) 
& 1.86 & 84.3 \;(\,$\pm$\,2.5) 
& 0.56 & $\mathbf{86.9} \; (\pm\,1.8)$ \\
\bottomrule
\end{tabular}
\label{tab:main-results}
\end{table*}

\begin{figure}[t]
  \centering
      \includegraphics[
        width=0.9\linewidth,
    ]{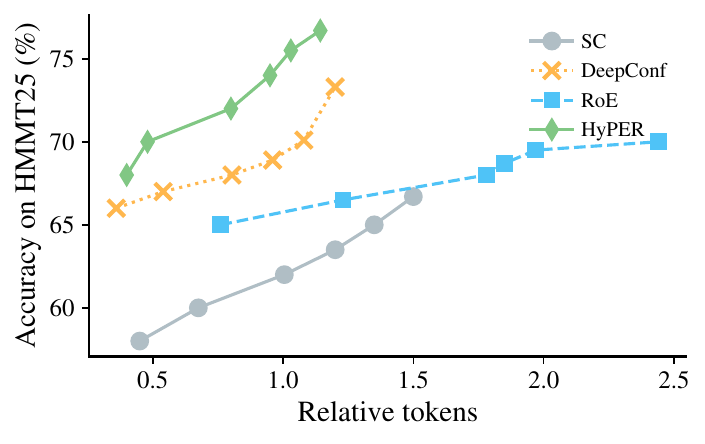}

  \caption{Accuracy--compute Pareto curves on HMMT25 under varying budgets $S_{\max}\in\{32,64,80,128,256,512\}$. The relative token cost is normalized by the SC baseline at budget $S_{\max}=80$.}
  \label{fig:pareto-hmmt25}
\end{figure}

\subsection{Main Results}
\label{sec:main-results}

Table~\ref{tab:main-results} reports accuracy and normalized token cost across all evaluated models and datasets under the effective-budget-aligned setup.
Overall, HyPER consistently improves the accuracy--compute trade-off over SC, Self-Certainty, DeepConf, and RoE.
The gains are most pronounced on hard multi-step benchmarks, where pruning-only methods reduce wasted exploration but do not explicitly refine promising paths, while always-on token-level refinement improves local token quality at substantially higher effective token cost.
By contrast, HyPER adaptively reallocates budget across pruning, branching, short-horizon expansion, token-level refinement, and answer aggregation, leading to higher accuracy with lower or comparable normalized token cost.

To further characterize the trade-off across different inference budgets, Figure~\ref{fig:pareto-hmmt25} plots Pareto curves on HMMT25 by sweeping $S_{\max}\in\{32,64,80,128,256,512\}$, with token cost normalized by the SC baseline at $S_{\max}=80$.
Across budgets, HyPER remains on or near the Pareto frontier, showing that its gains are not tied to a single operating point.
Instead, the controller consistently converts additional test-time compute into useful exploration or exploitation.

\textbf{Architecture generality.}
Although the main results focus on MoE backbones because \textsc{SingleToken} exploits expert-routing diversity, the online expand--reduce controller itself is architecture-agnostic.
Appendix~\ref{app:non-moe} provides a dense-model controller-only demonstration, showing that the adaptive control component can improve reasoning even without MoE-specific token refinement.

\subsection{Controller Behavior and Adaptive Exploration}
\label{sec:analysis-controller}

We next examine how the HyPER controller allocates actions over time. Figure~\ref{fig:controller-behavior} plots the action trajectories for two representative questions: an ``easy'' problem that most paths solve quickly, and a ``hard'' problem on which self-consistency struggles. For the easy question (Fig.~\ref{fig:controller-behavior}\subref{fig:controller-behavior-a}
), the controller quickly settles on \textsc{None} and only occasionally invokes \textsc{SingleToken}, rarely triggering \textsc{Branch} or \textsc{MultiToken}, matching our goal of avoiding unnecessary exploration when signals are already confident and consistent. For the hard question (Fig.~\ref{fig:controller-behavior}\subref{fig:controller-behavior-b}), the controller behaves differently: it activates \textsc{Branch} and \textsc{MultiToken} more often to widen the hypothesis set, and then gradually shifts to \textsc{SingleToken} to exploit high-confidence paths later.

Beyond these case studies, we also report dataset-level aggregate action frequencies for Qwen3-30B on two representative benchmarks (AIME24/HLE) in Figure~\ref{fig:controller-dataset}. As tasks become harder, the fraction of \textsc{None} decreases while the controller more frequently triggers exploration (\textsc{Branch}/\textsc{MultiToken}) and exploitation (\textsc{SingleToken}), indicating that HyPER adapts its compute allocation to problem difficulty.

\begin{figure}[t]
    \centering

\includegraphics[width=\linewidth]{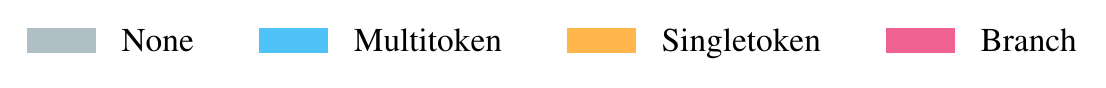}

    \begin{subfigure}[t]{0.48\columnwidth}
        \centering
        \includegraphics[width=\linewidth]{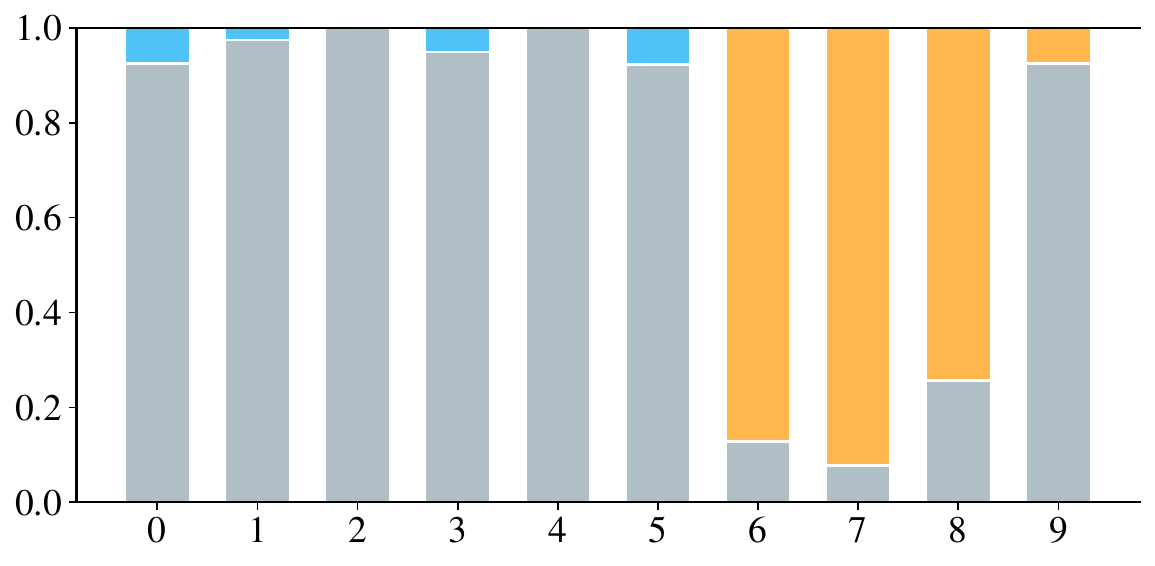}
        \caption{Controller actions on an easy problem.}
        \label{fig:controller-behavior-a}
    \end{subfigure}
    \hfill
    \begin{subfigure}[t]{0.48\columnwidth}
        \centering
        \includegraphics[width=\linewidth]{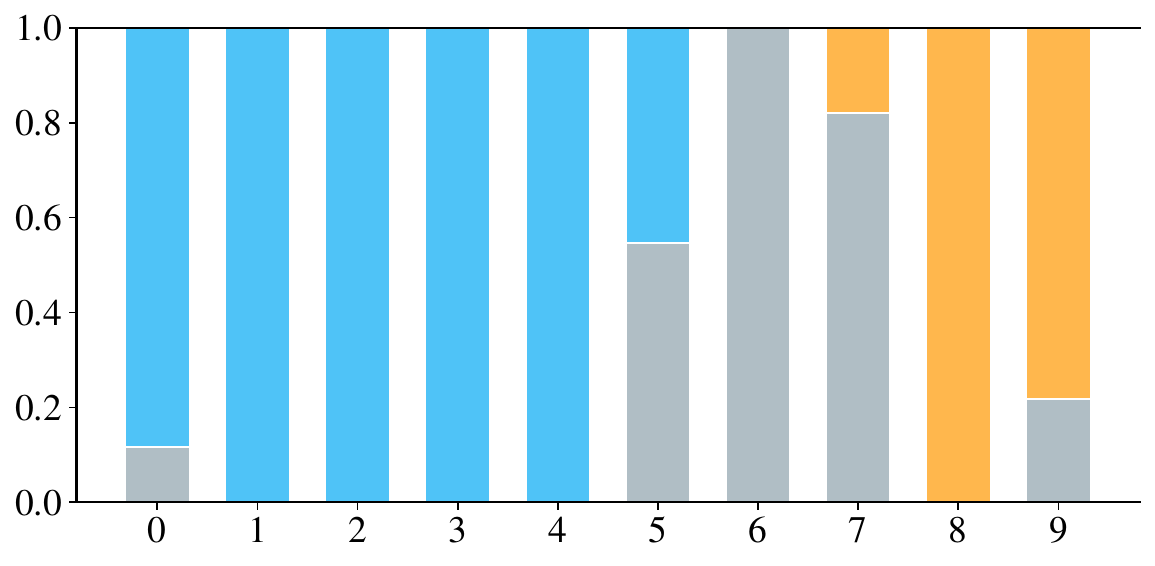}
        \caption{Controller actions on a hard problem.}
        \label{fig:controller-behavior-b}
    \end{subfigure}

    \caption{Controller behavior in different cases.}
    \label{fig:controller-behavior}
\end{figure}

\begin{figure}[t]
    \centering

    \includegraphics[width=\linewidth]{figs/final_figs/legend_only.pdf}

    \vspace{-6pt}

    \begin{subfigure}[t]{0.45\columnwidth}
        \centering
        \includegraphics[width=\linewidth]{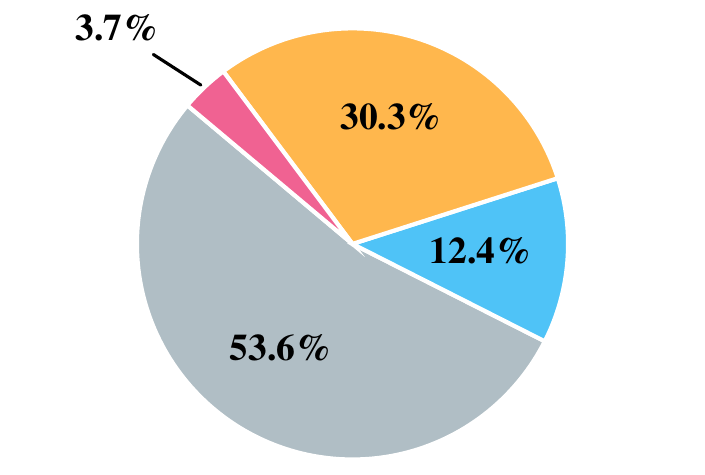}
        \caption{AIME24.}
        \label{fig:controller-dataset-a}
    \end{subfigure}
    \hfill
    \begin{subfigure}[t]{0.45\columnwidth}
        \centering
        \includegraphics[width=\linewidth]{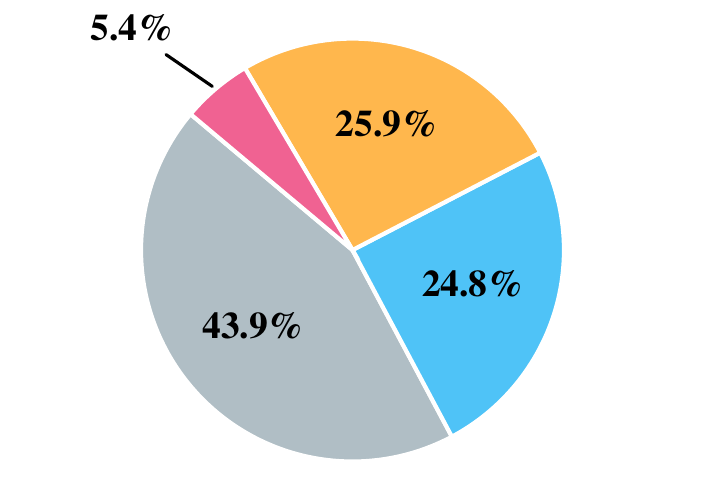}
        \caption{HLE.}
        \label{fig:controller-dataset-b}
    \end{subfigure}

    \caption{Dataset-level controller action frequencies.}
    \label{fig:controller-dataset}
\end{figure}

\subsection{Two-Pass Sampling versus RoE}
\label{sec:analysis-proe}

Figure~\ref{fig:proe-vs-roe} compares our SingleToken two-pass refinement against a RoE-style uniform token-level refinement baseline.
Panel~(a) shows that both methods substantially improve tail-token confidence over no refinement, indicating that token-level logit aggregation is an effective exploitation primitive and that SingleToken retains the token-quality gains of hyper-parallel routing.
Panel~(b) further shows that SingleToken consistently yields higher path diversity $D_t$, mitigating the rapid diversity collapse observed under the RoE baseline.

While our diversity-aware penalty is applied \emph{within} each path (encouraging different expert sets across the $K$ intra-token routes), it can still increase \emph{inter-path} diversity measured by $D_t$: by broadening the set of plausible next-token logits considered per token, the refinement step becomes less likely to deterministically steer different sampled prefixes into the same continuation, thereby preserving (and sometimes amplifying) trajectory-level disagreements across paths without directly coupling routes across paths.

\begin{figure}[!t]
    \centering
    \begin{subfigure}[t]{0.48\columnwidth}
        \centering
        \includegraphics[width=\linewidth]{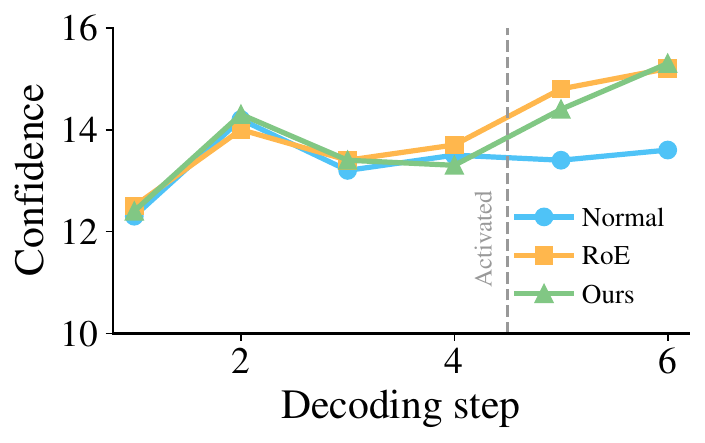}
        \caption{Both RoE and our aggregation method raise tail confidence.}
        \label{fig:proe-vs-roe-a}
    \end{subfigure}
    \hfill
    \begin{subfigure}[t]{0.48\columnwidth}
        \centering
        \includegraphics[width=\linewidth]{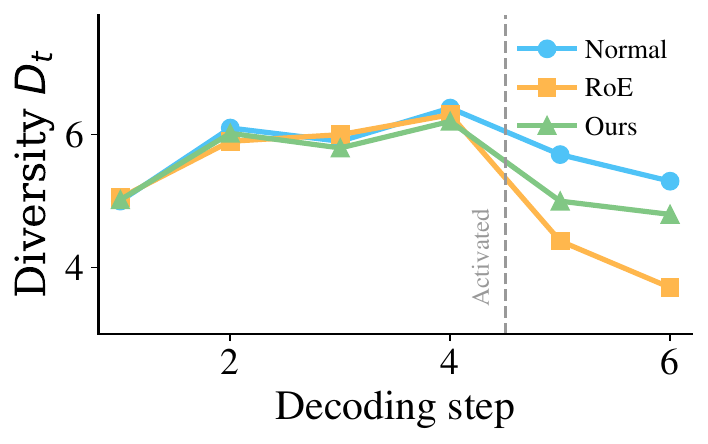}
        \caption{We maintain relatively higher diversity than RoE.}
        \label{fig:proe-vs-roe-b}
    \end{subfigure}
    \caption{Comparison between our method and RoE.}
    \label{fig:proe-vs-roe}
\end{figure}

\subsection{Ablation Studies}
\label{sec:ablation}

\begin{table}[t]
    \centering
    \caption{Ablation I: Expansion strategies.}
    \label{tab:ablation-components}
    \setlength{\tabcolsep}{2pt}
    \renewcommand{\arraystretch}{1.15}
    \begin{tabular}{lcc|cc}
        \toprule
        \textbf{Method}
        & \multicolumn{2}{c|}{\textbf{AIME25}}
        & \multicolumn{2}{c}{\textbf{HMMT25}} \\
        & Acc (\%) & Tokens
        & Acc (\%) & Tokens \\
        \midrule
        Self-consistency
            & 86.7 & $1.00\times$
            & 66.7 & $1.00\times$ \\
        SingleToken-only
            & 93.3 & $1.81\times$
            & 70.0 & $1.77\times$ \\
        MultiToken-only
            & 90.0 & $1.87\times$
            & 66.7 & $1.82\times$ \\
        Manual schedule
            & 93.3 & $1.78\times$
            & 76.7 & $1.87\times$ \\
        HyPER
            & \textbf{96.7} & $1.62\times$
            & \textbf{76.7} & $1.54\times$ \\
        \bottomrule
    \end{tabular}
\end{table}

\paragraph{Expansion strategies.}
Table~\ref{tab:ablation-components} compares \textit{SingleToken-only}, \textit{MultiToken-only}, a \textit{manual expand--reduce schedule}, and full \textit{HyPER}.
The manual schedule enforces early exploration via MultiToken and late exploitation via SingleToken, but lacks instance-adaptive control.
As a result, it either overspends computation or applies refinement at suboptimal stages, leading to inferior accuracy--compute trade-offs compared to HyPER.
Overall, HyPER consistently achieves the best accuracy while remaining near the accuracy–compute Pareto frontier, recovering $8$--$10\%$ of cases lost by pruning-based baselines and reducing average token usage by $10$--$12\%$.
SingleToken-only improves local token quality but can prematurely collapse the hypothesis pool in later stages, while static MultiToken-only policies either under-explore (large decision interval $T$) or overspend compute (small $T$).

\paragraph{Voting rules.}
Table~\ref{tab:ablation-voting} compares majority voting, confidence-weighted voting, and our \textit{length-aware confidence voting}. When changing the voting rules, we keep the same full HyPER method during reasoning.
Majority voting ignores path quality, and confidence weighting overly favors short but misleading paths.
Length-aware voting consistently performs best across datasets, confirming that under confidence pruning, both path confidence and path length provide complementary signals for answer selection.

\begin{table}[t]
    \centering
    \caption{
    Ablation II: voting strategies.
    }
    \label{tab:ablation-voting}
    \begin{tabular}{l cc}
        \toprule
        Voting rule & AIME25 & HMMT25 \\
        \midrule
        Majority voting             & 86.7 & 73.3 \\
        Confidence-weighted voting  & 90.0 & 73.3 \\
        Length-aware conf.\ voting (ours) & \textbf{96.7} & \textbf{76.7} \\
        \bottomrule
    \end{tabular}
\end{table}
\section{Conclusion}
\label{sec:conclusion}

We introduce HyPER, a training-free online control policy that unifies adaptive exploration and exploitation for scalable reasoning.
By leveraging phase-dependent dynamics and MoE routing diversity, HyPER consistently achieves better accuracy--compute trade-offs than static baselines.
We further bridge the existence--selection gap by identifying path length as a robust correctness signal under confidence pruning.
While our token-level primitive leverages MoE routing, our control policy generalizes to dense models, as shown in preliminary experiments.
Crucially, HyPER improves hypothesis generation and is orthogonal to learned verifiers; combining its dynamic policy with Process Reward Models (PRMs) remains a promising avenue for future test-time scaling.

\section*{Impact Statement}
This paper presents work whose goal is to advance the field of machine learning. There are many potential societal consequences of our work, none of which we feel must be specifically highlighted here.

\nocite{*}
\bibliography{main}
\bibliographystyle{icml2026}

\newpage
\appendix
\onecolumn
\appendix
\section{Definitions of online metrics}
\label{app:metrics}

\paragraph{Per-path confidence and top-$k$ entropy.}
Let $P_{p,t}(\cdot)$ denote the next-token distribution at step $t$ on path $p$, and let $y_t^{(p)}$ be the token actually selected.
We define token confidence following the DeepConf approach. 
Crucially, this metric focuses on the \textbf{alternative candidates} (competitors) to the selected token. 
Let $\mathcal{C}_{k}(p,t)$ be the set of the top-$k$ tokens excluding the selected one (or simply the top-$k$ candidates if $y_t^{(p)}$ is dominant). The confidence is calculated as the negative average log-probability of these candidates:

{\footnotesize
\setlength{\abovedisplayskip}{4pt}
\setlength{\belowdisplayskip}{4pt}
\begin{equation}
c_{p,t} = -\,\frac{1}{k}\sum_{v \in \mathcal{C}_{k}(p,t)} \log P_{p,t}(v).
\label{eq:token-conf}
\end{equation}
}

\noindent This formulation captures the ``impossibility'' of alternative paths. When the model is highly confident in $y_t^{(p)}$, the probabilities of competing tokens $P_{p,t}(v)$ approach zero, causing their negative log-probabilities to become large positive values. Thus, a \textbf{higher} $c_{p,t}$ indicates that the model considers all alternatives to be highly unlikely.

For entropy, we use a top-$k$ approximation over the full candidate set $\mathcal{V}_{k}(p,t)$ (including the selected token).
Let $\tilde P_{p,t}(v)= \frac{P_{p,t}(v)}{\sum_{u\in\mathcal{V}_k(p,t)} P_{p,t}(u)}$ be the renormalized distribution.
Then the token-level entropy is:
\begin{equation}
h_{p,t} = -\sum_{v\in \mathcal{V}_k(p,t)} \tilde P_{p,t}(v)\log \tilde P_{p,t}(v).
\label{eq:token-entropy-topk-app}
\end{equation}
Unlike confidence, a \textit{lower} entropy $h_{p,t}$ signifies higher certainty.

\paragraph{Global confidence statistics.}
From these path-wise quantities we define the global metrics:
\begin{align}
\bar{C}_t &= \frac{1}{S_t}\sum_{p\in\mathcal{S}_t} c_{p,t}, \label{eq:C-bar-app}\\
H_t &= \frac{1}{S_t}\sum_{p\in\mathcal{S}_t} h_{p,t}, \label{eq:H-def-app}\\
\beta_t &= \frac{1}{S_t}\max_{v}\left|\left\{p\in\mathcal{S}_t:\; y_t^{(p)}=v\right\}\right|. \label{eq:beta-def-app}
\end{align}

\paragraph{Distribution-level diversity via Jensen--Shannon divergence.}
We measure local disagreement between predictive distributions using the Jensen--Shannon divergence (JSD).
For two distributions $P$ and $Q$ over the same support, define
\begin{align}
\mathrm{JSD}(P\|Q) &= \frac{1}{2}\mathrm{KL}\!\left(P\middle\|M\right)
+ \frac{1}{2}\mathrm{KL}\!\left(Q\middle\|M\right), \\
M &= \frac{1}{2}(P+Q), \qquad
\mathrm{KL}(P\|Q)=\sum_{v} P(v)\log\frac{P(v)}{Q(v)}.
\label{eq:jsd-def-app}
\end{align}
Using the (optionally top-$k$ renormalized) next-token distributions $\tilde P_{p,t}$, we define
\begin{equation}
D_{\mathrm{dist},t}
=
\frac{2}{S_t(S_t-1)}\sum_{p<q,\;p,q\in\mathcal{S}_t}
\mathrm{JSD}\!\left(\tilde P_{p,t}\,\middle\|\,\tilde P_{q,t}\right).
\label{eq:Ddist-def-app}
\end{equation}

\paragraph{Suffix-level diversity via normalized edit distance.}
Let $\mathrm{suf}_L(p,t)$ be the length-$L$ suffix token sequence of path $p$ up to step $t$
(or the entire suffix since the last decision step if shorter).
Let $\mathrm{Lev}(\cdot,\cdot)$ denote Levenshtein edit distance.
We use a length-normalized edit distance:
\begin{equation}
D_{\mathrm{seq},t}
=
\frac{2}{S_t(S_t-1)}\sum_{p<q,\;p,q\in\mathcal{S}_t}
\frac{
\mathrm{Lev}\!\left(\mathrm{suf}_L(p,t),\mathrm{suf}_L(q,t)\right)
}{
\max\left(\left|\mathrm{suf}_L(p,t)\right|,\left|\mathrm{suf}_L(q,t)\right|\right)
}.
\label{eq:Dseq-def-app}
\end{equation}

\paragraph{Composite diversity and confidence dispersion.}
We combine both notions of diversity as
\begin{equation}
D_t = \eta\,D_{\mathrm{dist},t} + (1-\eta)\,D_{\mathrm{seq},t},
\label{eq:D-def-app}
\end{equation}
and we measure confidence dispersion across paths by
\begin{equation}
\Var(C)_t = \frac{1}{S_t}\sum_{p\in\mathcal{S}_t}\left(c_{p,t}-\bar{C}_t\right)^2.
\label{eq:varC-def-app}
\end{equation}

\paragraph{Global-max normalization (used by the controller).}
To map each metric onto a comparable scale, we normalize by a global maximum estimated during warm-up:
\begin{equation}
X_t^{\mathrm{norm}} = \min\!\left(\frac{X_t}{X_{\max}}, 1\right),
\label{eq:global-max-norm-app}
\end{equation}
where $X_{\max}$ is the maximum observed value of $X_t$ in warm-up.
We apply Eq.~\eqref{eq:global-max-norm-app} to $\bar{C}_t$, $H_t$, $D_t$, and $\Var(C)_t$.

\section{Full HyPER Decoding Pipeline}
\label{app:pipeline}

The detailed decoding pipeline of HyPER is described in Algorithm~\ref{alg:hyper-pipeline}.

\begin{algorithm}[h]
\caption{HyPER decoding pipeline}
\label{alg:hyper-pipeline}
\begin{algorithmic}[1]
\Require Prompt $x$, LLM $f_\theta$, interval $T$, budgets $T_{\max}, S_{\max}$
\Ensure Final answer $\hat{a}$
\State \textbf{Warm-up:} run SC-style decoding to obtain an initial path pool $\mathcal{S}_0$ and a pruning threshold $\tau$
\For{$t = 1,2,\dots, T_{\max}$}
  \If{$\mathcal{S}_t = \emptyset$} \textbf{break} \EndIf

  \If{$t \bmod T = 1$}
    \State \textbf{Compute online pool statistics} from $\mathcal{S}_t$:
    \Statex \hspace{1.2em} confidence/uncertainty $(\bar C_t, H_t, \beta_t)$ and diversity $D_t$
    \Statex \hspace{1.2em} (optionally) dispersion $\Var(C)_t$ and pool size $|\mathcal{S}_t|$
    \State \textbf{Select an action} $a_t \in \{\textsc{None}, \textsc{SingleToken}, \textsc{MultiToken}, \textsc{Branch}\}$:
    \Statex \hspace{1.2em} $a_t \gets \arg\max_a \mathrm{Score}(a; x, \mathcal{S}_t)$ \Comment{cf.\ Sec.~\ref{sec:method} / App.~\ref{app:metrics}}
  \EndIf

  \State \textbf{Apply action $a_t$ to update the pool $\mathcal{S}_t$:}
  \If{$a_t = \textsc{None}$}
    \State Decode one token for each $p \in \mathcal{S}_t$ with standard routing
  \ElsIf{$a_t = \textsc{SingleToken}$}
    \State Decode one token for each $p \in \mathcal{S}_t$ using single-token aggregation
  \ElsIf{$a_t = \textsc{Branch}$}
    \State Expand the pool by duplicating selected paths and decoding them independently
  \ElsIf{$a_t = \textsc{MultiToken}$}
    \State Perform short-horizon local expansion for $m$ steps and merge back into the pool
  \EndIf

  \State Update sliding-window confidences; prune paths with $c_{p,t}<\tau$ or EOS/max-length
\EndFor
\State Collect completed paths and aggregate answers via length- and confidence-aware voting to obtain $\hat{a}$
\State \Return $\hat{a}$
\end{algorithmic}
\end{algorithm}

\section{The pseudo-code of Two-Pass Expert Sampling with Diversity Penalty}
\label{app:PROE}

The detailed sampling strategy is described in Algorithm~\ref{alg:diverse-expert}.

\begin{algorithm}[t]
\small
\caption{Two-Pass Expert Sampling with Diversity Penalty}
\label{alg:diverse-expert}
\begin{algorithmic}[1]
\Require Router logits $L \in \mathbb{R}^{B \times E}$, top-$k$ value $k$, penalty strength $\lambda$
\Ensure Final expert indices $I_2$ and weights $V_2$
\vspace{4pt}

\Statex \textcolor{gray}{\textbf{First pass:} standard noisy top-$k$ routing}
\State $G \gets -\log\bigl(-\log(\mathrm{rand\_like}(L))\bigr)$
\State $\tilde{L} \gets L + G$
\State $P_1 \gets \mathrm{softmax}(\tilde{L})$
\State $(I_1, V_1) \gets \mathrm{topk}(P_1, k)$

\vspace{4pt}
\Statex \textcolor{gray}{\textbf{Track expert usage as a sparse score matrix}}
\State $S \gets \mathbf{0}_{E \times B}$ \Comment{rows: experts, cols: routes (tokens)}
\For{$b = 1$ to $B$}
    \For{$j = 1$ to $k$}
        \State $e \gets I_1[b, j]$ \Comment{the $j$-th selected expert for route $b$}
        \State $S[e, b] \gets \tilde{L}[b, e]$
    \EndFor
\EndFor
\State $U_{\text{cum}} \gets \mathrm{cumsum}(S, \text{dim}=2)$
\State $U_{\text{prev}} \gets \mathrm{roll}(U_{\text{cum}}, 1, \text{dim}=2)$
\State $U_{\text{prev}}[:, 0] \gets 0$ \Comment{no history for the first route}

\vspace{4pt}
\Statex \textcolor{gray}{\textbf{Second pass:} apply deterministic diversity penalty and re-route}
\State $\Delta \gets \tanh\!\bigl(U_{\text{prev}}^\top\bigr)$ \Comment{broadcast to shape $B \times E$}
\State $L_2 \gets \tilde{L} - \lambda \cdot \Delta$
\State $P_2 \gets \mathrm{softmax}(L_2)$
\State $(I_2, V_2) \gets \mathrm{topk}(P_2, k)$

\vspace{3pt}
\State \Return $(I_2, V_2)$
\end{algorithmic}
\end{algorithm}

\section{KV caching for Single-Token Aggregation}
\label{app:kv-cache}

Single-token aggregation expands each surviving hypothesis path by sampling \(K\) parallel expert-routed proposals for the \emph{current} token.
A naive implementation would maintain \(K\) independent KV caches (one per routed proposal), leading to \(O(K)\) growth in both memory and compute.
Instead, we adopt RoE's \emph{Clean Cache} strategy~\citep{roe_hyper_parallel}, which localizes stochasticity to the current token and shares history states across proposals.

\paragraph{Clean Cache.}
For each surviving path, we construct a batched forward pass over the \(K\) routed proposals for the next-token computation, but we apply stochastic routing \emph{only} at the current token.
Concretely, we include a deterministic ``clean'' proposal (batch index \(0\)) by setting the routing temperature for that proposal to zero, and we reuse its KV cache as the shared history cache for all other proposals in the batch.
This yields sufficient diversity from per-token routing perturbations while ensuring that the KV-cache \emph{memory footprint does not scale with \(K\)}; it matches the memory usage of standard decoding for the same hypothesis path history, with additional overhead confined to the batched forward computation of the current token.

\paragraph{Overall memory scaling.}
Because different hypothesis paths have different prefix histories, KV caches are still maintained \emph{per path}.
Therefore, the KV-cache memory scales primarily with the number of \emph{surviving paths} \(|\mathcal{S}_t|\) (capped by \(S_{\max}\)), rather than with \(K\cdot|\mathcal{S}_t|\).
In practice, the extra cost of single-token aggregation is dominated by the batched MoE forward for the current token, while KV-cache storage remains comparable to standard multi-path decoding under the same \(|\mathcal{S}_t|\) cap.

\section{Additional Experimental Analysis}

\subsection{Hyperparameter sensitivity}
\label{app:hyperparam-sens}

We study the sensitivity of HyPER to several key hyperparameters on two challenging benchmarks, AIME25 and HMMT25, under the same main-result evaluation protocol and fixed budget. We vary one hyperparameter at a time while keeping all other settings identical to the main configuration (Section~\ref{sec:experiments}), where the top-$K_a$ answer voting cutoff is fixed at $K_a=3$. Figure~\ref{fig:hyperparam-sens} reports the accuracy trends across three representative values for each hyperparameter.

\paragraph{Controller decision interval $T$.}
We sweep the decision interval $T\in\{32,64,128\}$.
Overall, accuracy exhibits mild fluctuations but remains within an acceptable range on both datasets.
Smaller $T$ reacts more frequently to transient pool statistics, while larger $T$ updates less often and can be slower to adapt when the pool transitions between exploration- and exploitation-dominant phases.
We use $T=64$ as a stable default that balances responsiveness and signal stability.

\paragraph{Diversity composition weight $\eta$.}
We sweep the diversity composition weight $\eta\in\{0.2,0.4,0.8\}$ in the pool-level diversity signal.
Across both datasets, performance varies moderately but remains stable around the default $\eta=0.4$.
Extremely small or large $\eta$ can over-emphasize one component of the diversity signal and lead to slightly suboptimal exploration/exploitation allocation, consistent with the role of $\eta$ as a trade-off knob.

\paragraph{Two-pass reuse penalty strength $\lambda$.}
We sweep the reuse-penalty strength $\lambda\in\{0.05,0.1,0.2\}$ in the two-pass sampler used by \textsc{SingleToken}.
Among the tested hyperparameters, $\lambda$ can have a more visible effect: a \emph{too-light} penalty (e.g., $\lambda=0.05$) weakly discourages expert reuse and thus behaves closer to independent route sampling (i.e., closer in spirit to prior token-level routing perturbations), which may reduce intra-token route diversity; conversely, a \emph{too-strong} penalty (e.g., $\lambda=0.2$) can over-penalize frequently selected experts and distort expert choice, degrading refinement quality.
We therefore use $\lambda=0.1$ as the default, which provides consistent gains without overly constraining expert selection.

\begin{figure}[t]
  \centering
  \includegraphics[width=\columnwidth]{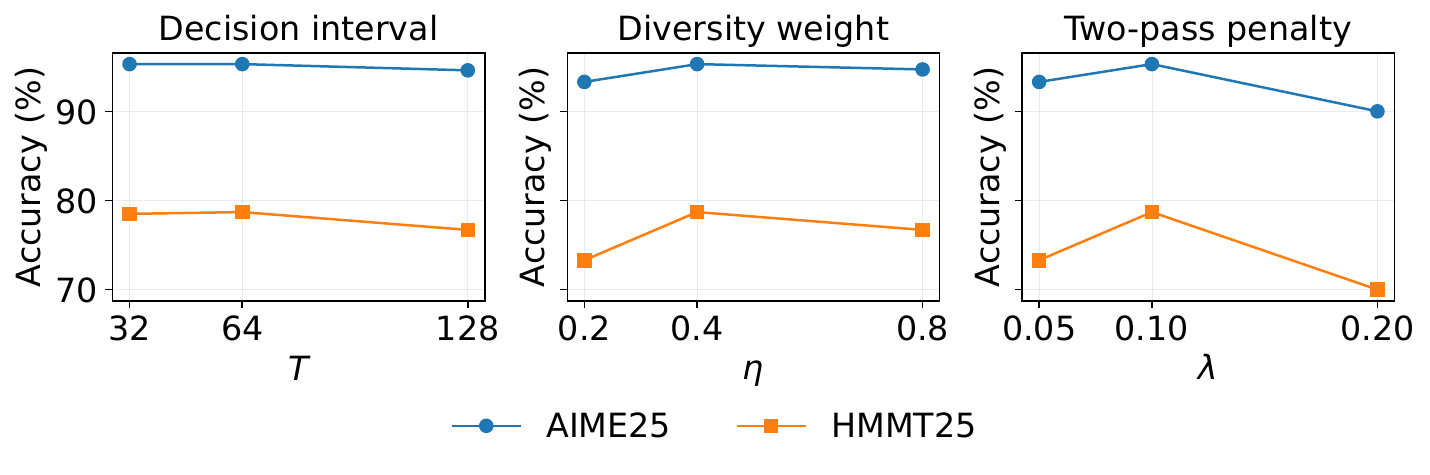}
  \caption{\textbf{Hyperparameter sensitivity} on AIME25 and HMMT25. We sweep the controller decision interval $T$, diversity weight $\eta$, and two-pass penalty strength $\lambda$, while fixing all other settings and the evaluation budget to the main configuration.}
  \label{fig:hyperparam-sens}
\end{figure}

\subsection{Length Bias Robustness Analysis}
\label{app:length-uni}
In this section, we analyze the robustness of the length bias phenomenon across different datasets and temperature settings. We examine how path lengths for correct and incorrect answers are distributed under various conditions. The results show that, even with different temperature settings, the length of correct paths consistently tends to be longer than incorrect ones, indicating that path length remains a strong feature for correctness, even in the presence of noise and varying conditions.

\begin{figure}[h!]
\centering
\includegraphics[width=0.8\textwidth]{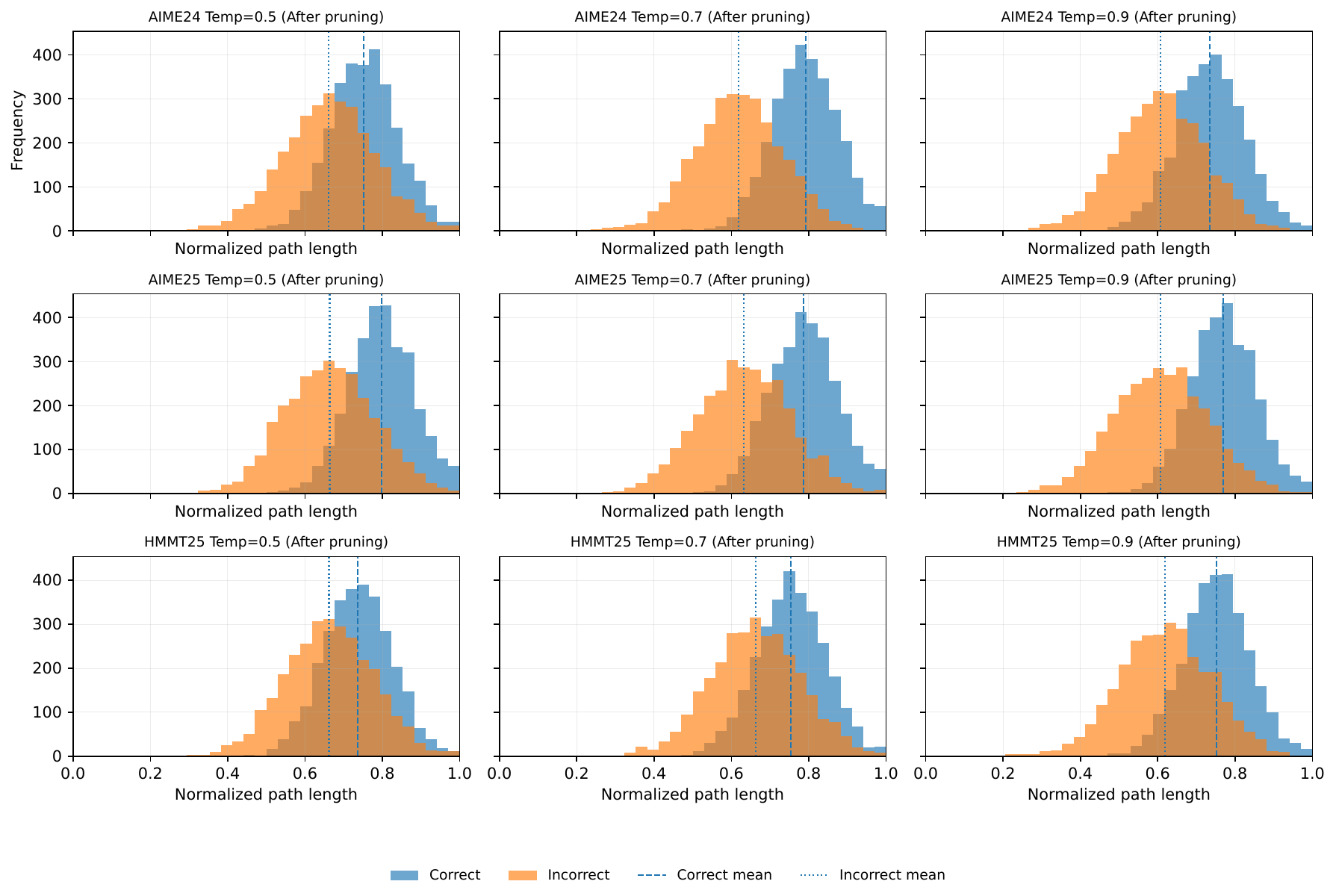}
\caption{Robustness of length bias across three datasets (AIME24, AIME25, HMMT25) and three temperature settings (0.5, 0.7, 0.9). In all configurations, correct paths consistently have longer lengths than incorrect paths, demonstrating the robustness of the length signal in guiding correct answer selection.}
\label{fig:robustness_analysis}
\end{figure}

\subsection{Ablation Study on Voting Strategies}
\label{app:conf_vs_length_ablation}
To demonstrate that length is an auxiliary signal and not the sole factor in path selection, we perform an ablation study comparing three different voting strategies: length-based voting, confidence-based voting, and combined voting. The table below summarizes the accuracy of these strategies across three datasets. As shown, while length-based voting performs better than random, the combined voting strategy, which leverages both confidence and length, consistently achieves the highest accuracy, confirming that confidence-based signals play a central role.

\begin{table}[h!]
\centering
\small
\setlength{\tabcolsep}{6pt}
\renewcommand{\arraystretch}{1.1}
\begin{tabular}{l|c|c|c}
\toprule
\textbf{Voting Strategy} & \textbf{AIME24} & \textbf{AIME25} & \textbf{HMMT25} \\
\midrule
Length-Based Voting & 90\% & 93.3\% & 73.3\% \\
Confidence-Based Voting & 93.3\% & 90.0\% & 73.3\% \\
Combined Voting (Length + Confidence) & 96.7\% & 96.7\% & 76.7\% \\
\bottomrule
\end{tabular}
\caption{Accuracy comparison between length-based voting, confidence-based voting, and combined voting strategies. The combined voting strategy outperforms both length-based and confidence-based voting, demonstrating that length is an auxiliary signal.}
\label{tab:voting_ablation}
\end{table}

\subsection{Analysis of ``Long but Incorrect'' Paths}
\label{app:length_anal}
A common concern with length-based signals is the possibility of selecting long but incorrect paths. To address this, we provide a simple analysis of how our method handles such cases. When faced with a long but incorrect path, our confidence-based aggregation mechanism mitigates the potential risk by assigning lower confidence to incorrect paths, even if they are long. Additionally, the pruning mechanism ensures that only paths with high confidence are retained, further reducing the likelihood of selecting long but incorrect paths.

In our experiments, we observed that the number of long-but-incorrect paths was minimal. The pruning threshold, guided by confidence, effectively eliminated most of these paths, ensuring that the final answer was selected from a set of high-confidence paths. Moreover, in cases where long paths were selected but incorrect, confidence-based voting was able to correctly identify and discard them in favor of shorter, correct alternatives. 

This behavior demonstrates that our method does not solely rely on path length, but combines length and confidence to make more robust predictions, as evidenced by the consistent performance of the combined voting strategy.

\section{Importance-Sampling View of Length--Confidence Voting}
\label{app:length-conf-theory}

\paragraph{Goal: answer marginalization under a biased decoding proposal.}
Let $x$ be the input problem and let $r$ denote a full reasoning trajectory (a completed CoT path).
Each trajectory deterministically induces a scalar answer via an extraction map $a(r)\in\mathcal{A}$.
Our evaluation target is the answer marginal under an (implicit) target path distribution $P^\star(r\mid x)$:
\begin{equation}
\label{eq:target-marginal}
P^\star(a\mid x)
=
\sum_{r:\,a(r)=a} P^\star(r\mid x).
\end{equation}
Hence selecting the most probable answer under the marginal objective is
\begin{equation}
\label{eq:target-decision}
a^\star(x)
=
\arg\max_{a\in\mathcal{A}} P^\star(a\mid x)
=
\arg\max_{a\in\mathcal{A}} \sum_{r:\,a(r)=a} P^\star(r\mid x).
\end{equation}

In practice, we do not sample trajectories from $P^\star(r\mid x)$.
Instead, the multi-path decoding pipeline (sampling temperature/top-$p$, pruning, branching, etc.)
induces a \emph{proposal} distribution over completed trajectories, denoted by $q(r\mid x)$.
We observe $N$ sampled trajectories $r_1,\dots,r_N\sim q(\cdot\mid x)$, with extracted answers
$a_i \triangleq a(r_i)$.
For each sampled path we also compute observable path features:
length $L_i$ and a global confidence statistic $\hat c_i$ from the pruning backbone.

\paragraph{Importance sampling identity.}
Define the unnormalized target \emph{answer mass}
\begin{equation}
\label{eq:Za-def}
Z(a)
\triangleq
\sum_{r:\,a(r)=a} P^\star(r\mid x),
\end{equation}
so that $P^\star(a\mid x) = Z(a)/\sum_{a'}Z(a')$ and maximizing $P^\star(a\mid x)$ is equivalent to maximizing $Z(a)$.
For any answer $a$, by the standard change-of-measure,
\begin{align}
\label{eq:is-identity}
Z(a)
&=
\sum_{r} \mathbf{1}[a(r)=a]\;P^\star(r\mid x)
=
\sum_{r} q(r\mid x)\;\mathbf{1}[a(r)=a]\;\underbrace{\frac{P^\star(r\mid x)}{q(r\mid x)}}_{w(r)}
\\
&=
\mathbb{E}_{r\sim q(\cdot\mid x)}\big[\mathbf{1}[a(r)=a]\cdot w(r)\big],
\nonumber
\end{align}
where $w(r)=P^\star(r\mid x)/q(r\mid x)$ is the (generally intractable) density ratio.
Thus, if $w(r)$ were available, a Monte Carlo estimator of $Z(a)$ would be
\begin{equation}
\label{eq:is-mc}
\widehat Z_{\mathrm{IS}}(a)
=
\frac{1}{N}\sum_{i=1}^N \mathbf{1}[a_i=a]\;w(r_i),
\qquad
\widehat a_{\mathrm{IS}}
=
\arg\max_{a\in\mathcal{A}} \widehat Z_{\mathrm{IS}}(a).
\end{equation}
Equivalently, one may estimate the normalized marginal via \emph{self-normalized importance sampling (SNIS)}:
\begin{equation}
\label{eq:snis}
\widehat P_{\mathrm{SNIS}}(a\mid x)
=
\sum_{i=1}^N \bar w_i\;\mathbf{1}[a_i=a],
\qquad
\bar w_i
=
\frac{w(r_i)}{\sum_{j=1}^N w(r_j)}.
\end{equation}
Both (\ref{eq:is-mc}) and (\ref{eq:snis}) implement the marginalization objective
(\ref{eq:target-marginal}) under a biased proposal $q$.

\paragraph{Proxy density-ratio modeling with low-variance weighting.}
The exact ratio $w(r)$ is unavailable for our decoding pipeline.
We therefore approximate the ratio by a \emph{proxy} weight that depends on stable, path-level observables.
Let $\phi(r)\in\mathbb{R}^d$ be a feature map; in HyPER we use two normalized features derived from $(L_i,\hat c_i)$:
\begin{equation}
\label{eq:feat-norm}
\tilde L_i \triangleq \frac{L_i}{\sum_{j=1}^N L_j},
\qquad
\tilde C_i \triangleq \frac{\hat c_i}{\max_{j\in[N]}\hat c_j},
\qquad
\phi(r_i)\triangleq (\tilde L_i,\tilde C_i).
\end{equation}
We adopt a log-linear proxy for the ratio,
\begin{equation}
\label{eq:proxy-ratio}
w(r)\ \approx\ \tilde w_\theta(r)
\triangleq
\exp\!\big(\theta^\top \phi(r)\big),
\qquad
\theta=(\theta_L,\theta_C),
\end{equation}
which can be interpreted as a smooth, controlled reweighting.
Plugging (\ref{eq:proxy-ratio}) into (\ref{eq:is-mc}) yields the proxy-IS answer mass estimator:
\begin{equation}
\label{eq:proxy-is}
\widehat Z_{\mathrm{proxy}}(a)
\propto
\sum_{i:\,a_i=a}\exp\!\big(\theta^\top \phi(r_i)\big).
\end{equation}

\begin{proposition}[HyPER voting as linearized proxy-IS ranking]
\label{prop:linearized-proxy-is}
Fix a candidate answer set $\mathcal{A}'\subseteq\mathcal{A}$.
Assume the proxy density ratio takes the log-linear form (\ref{eq:proxy-ratio}) and that the weights are \emph{mild}
in the sense that $|\theta^\top \phi(r_i)|\le \epsilon$ for all $i\in[N]$ and some small $\epsilon$.
Then, up to an additive answer-independent constant and an $O(\epsilon^2)$ remainder,
maximizing the proxy-IS mass estimator (\ref{eq:proxy-is}) over $\mathcal{A}'$ is equivalent to maximizing the
linear score
\begin{equation}
\label{eq:linearized-score}
\operatorname{score}(a)
=
\sum_{i:\,a_i=a}\theta^\top \phi(r_i)
=
\sum_{i:\,a_i=a}\Big(\lambda_{\mathrm{len}}\tilde L_i + \lambda_{\mathrm{conf}}\tilde C_i\Big),
\qquad a\in\mathcal{A}',
\end{equation}
where $(\lambda_{\mathrm{len}},\lambda_{\mathrm{conf}})$ reparameterize $\theta$ after feature rescaling.
\end{proposition}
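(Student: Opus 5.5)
The plan is to Taylor-expand the exponential in the proxy-IS estimator (\ref{eq:proxy-is}) to first order under the mildness assumption, then show that the zeroth-order term is handled by the restriction to $\mathcal{A}'$.

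\textbf{Expansion.} For $|u|\le\epsilon$ (and, say, $\epsilon\le 1$) we have
\[
e^{u}=1+u+R(u),\qquad |R(u)|\le \tfrac{e^{\epsilon}}{2}u^2=O(\epsilon^2).
\]
Summing over the paths supporting $a$ gives
\[
\widehat Z_{\mathrm{proxy}}(a)\propto n_a+\operatorname{score}(a)+\sum_{i:a_i=a}R\bigl(\theta^\top\phi(r_i)\bigr),
\]
where $n_a=\sum_i\mathbf{1}[a_i=a]$ is the vote count. The remainder is bounded by $n_a\cdot O(\epsilon^2)\le N\cdot O(\epsilon^2)$, which is $O(\epsilon^2)$ for the fixed sample size $N$. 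The global proportionality constant $1/N$ is positive and answer-independent, so it does not affect the $\arg\max$.

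\textbf{The zeroth-order term.} This is the main obstacle: $n_a$ is not answer-independent in general. I would handle it through the role of $\mathcal{A}'$. In HyPER, $\mathcal{A}'=\mathcal{A}_K$ is the top-$K_a$ answer set by majority count. The statement's phrase ``up to an additive answer-independent constant'' is then read as treating the count term as equal across $\mathcal{A}'$. Two routes make this precise.

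\emph{Preferred route.} Center the features by absorbing a per-path constant offset into the log-linear proxy. Equivalently, replace the proxy by $\exp(\theta^\top\phi-\kappa)$; this leaves the $\arg\max$ unchanged because it rescales every term by the same factor $e^{-\kappa}$. If the statement's intended reading is that the weights $1+\theta^\top\phi$ are compared on a candidate set where counts tie, for instance after the Top-$K$ filter, then $n_a$ equals a constant $n$ on $\mathcal{A}'$. In that case
\[
\widehat Z_{\mathrm{proxy}}(a)=c\,\bigl(n+\operatorname{score}(a)\bigr)+O(\epsilon^2),
\]
and the claim follows.

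\emph{Fallback route.} If the counts do not tie, I would state the equivalence in the form the proposition literally gives: maximizing $\widehat Z_{\mathrm{proxy}}$ equals maximizing $n_a+\operatorname{score}(a)$ up to $O(\epsilon^2)$. I would then note that within the Top-$K$ filter, the count information has already been consumed by the selection of $\mathcal{A}'$. This is the interpretive step I expect a careful proof to need to state explicitly as an assumption.

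\textbf{Ranking and reparameterization.} Ranking equivalence follows whenever the score gaps on $\mathcal{A}'$ exceed the $O(\epsilon^2)$ remainder; ties within that margin are the only possible discrepancy, and I would phrase this as ``up to an $O(\epsilon^2)$ remainder.'' Finally, setting $\lambda_{\mathrm{len}}=\theta_L$ and $\lambda_{\mathrm{conf}}=\theta_C$ reproduces (\ref{eq:linearized-score}). A common positive rescaling of $\theta$ alters neither the $\arg\max$ of the linear score nor the mildness bound beyond a constant factor, which justifies the statement's phrase ``after feature rescaling'' and matches the HyPER rule (\ref{eq:hyper-marginalize-full}).
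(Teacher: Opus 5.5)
Your Taylor step and remainder bound are fine and coincide with the paper's proof. That proof likewise reaches $\widehat Z_{\mathrm{proxy}}(a)\propto n_a+\operatorname{score}(a)+O(n_a\epsilon^2)$. It then disposes of $n_a$ by calling it ``used only for candidate formation'' or ``absorbed into the comparison'' and dropping it as answer-independent. You correctly flag $n_a$ as the crux, but none of your routes closes the gap, and neither does the paper.

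\textbf{Centering.} Replacing the proxy by $\exp(\theta^\top\phi-\kappa)$ multiplies every $\widehat Z_{\mathrm{proxy}}(a)$ by the same factor $e^{-\kappa}$ and leaves $n_a$ untouched. Expanding in the shifted exponent gives $(1-\kappa)n_a+\operatorname{score}(a)$. Cancelling the count would need $\kappa=1$, which destroys the smallness that justifies the expansion.

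\textbf{Top-$K$ truncation.} It does not make counts tie on $\mathcal{A}_K$. It keeps the $K$ most frequent answers, whose counts generally differ.

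\textbf{Fallback.} Saying ``the count has already been consumed by forming $\mathcal{A}'$'' describes how HyPER's rule was designed; it is not a property of $\widehat Z_{\mathrm{proxy}}$. Also, ranking by $n_a+\operatorname{score}(a)$ is not what the proposition asserts.

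The gap is not a matter of interpretation: the statement is false as written, and the mildness assumption is exactly what breaks it. Each weight lies in $[e^{-\epsilon},e^{\epsilon}]$. So $n_a\ge n_b+1$ forces $\widehat Z_{\mathrm{proxy}}(a)>\widehat Z_{\mathrm{proxy}}(b)$ as soon as $e^{2\epsilon}<1+1/N$. Hence for small $\epsilon$ the proxy-IS maximizer over $\mathcal{A}'$ is always a max-count answer, and $\operatorname{score}$ only breaks ties among those.

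For a concrete counterexample, take $\theta=(s,0)$ with $s>0$ and eight paths: three answering $a$ with length $10$, and five answering $b$ with length $1$. Then $\mathcal{A}'=\{a,b\}$ is precisely the Top-$2$ set. We get $\operatorname{score}(a)=6s/7>s/7=\operatorname{score}(b)$. Yet $3e^{2s/7}<5e^{s/35}$ for every $0<s<\tfrac{35}{9}\ln\tfrac{5}{3}$. So the two rules pick different answers however small $\epsilon=2s/7$ is.

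What can be proved is one of two things:
(i) The proposition under the added hypothesis that $n_a$ is constant on $\mathcal{A}'$. In that case your argument is complete, with remainder $O(N\epsilon^2)$ and equivalence holding away from near-ties of that size.
(ii) That mild-weight proxy-IS ranks by $n_a+\operatorname{score}(a)$, i.e.\ majority voting with a first-order correction. That is not the HyPER rule.
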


\begin{proof}
For each sample $i$, apply a first-order Taylor expansion:
\begin{equation}
\exp(\theta^\top \phi(r_i))
=
1+\theta^\top \phi(r_i) + O(\epsilon^2).
\end{equation}
Substituting into (\ref{eq:proxy-is}) yields, for any $a$,
\begin{align}
\widehat Z_{\mathrm{proxy}}(a)
&\propto
\sum_{i:\,a_i=a}\left(1+\theta^\top \phi(r_i) + O(\epsilon^2)\right)
=
\underbrace{\sum_{i:\,a_i=a}1}_{\text{frequency}}
+
\sum_{i:\,a_i=a}\theta^\top \phi(r_i)
+
O\!\left(n_a\epsilon^2\right),
\end{align}
where $n_a=\sum_{i=1}^N\mathbf{1}[a_i=a]$ is the number of supporting paths for answer $a$.
When comparing answers within a fixed candidate set $\mathcal{A}'$,
the constant offset induced by the $1$ term can be (i) used only for candidate formation
or (ii) absorbed into the comparison as an additive term, while the $O(n_a\epsilon^2)$ remainder is uniformly small
under mild weights.
Dropping answer-independent constants yields the linear ranking rule (\ref{eq:linearized-score}).
\end{proof}

\paragraph{Top-$K$ truncation as variance control (rare-answer instability).}
A practical issue in (S)NIS for discrete selection is that answers with extremely small support under the proposal
yield high-variance Monte Carlo estimates:
for rare events, $\mathbf{1}[a(r)=a]$ is almost always zero, so $\widehat Z(a)$ is unstable under finite $N$.
The following elementary calculation formalizes this intuition in the simplest (uniform-weight) case.

\begin{lemma}[Relative error of frequency (uniform-weight marginalization)]
\label{lem:freq-var}
Let $r_1,\dots,r_N\sim q(\cdot\mid x)$ be i.i.d.\ samples and define $A_i\triangleq a(r_i)$.
For any fixed answer $a$, let $p_a \triangleq \Pr_{r\sim q}[a(r)=a]$.
The frequency estimator $\hat p_a \triangleq \frac{1}{N}\sum_{i=1}^N \mathbf{1}[A_i=a]$ satisfies
\begin{equation}
\label{eq:freq-var}
\operatorname{Var}[\hat p_a]=\frac{p_a(1-p_a)}{N},
\qquad
\frac{\sqrt{\operatorname{Var}[\hat p_a]}}{p_a}
=
\sqrt{\frac{1-p_a}{N p_a}}.
\end{equation}
In particular, the relative standard deviation scales as $1/\sqrt{N p_a}$ and blows up when $p_a$ is small.
\end{lemma}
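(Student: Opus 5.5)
The plan is to treat $\hat p_a$ as a normalized sum of i.i.d.\ Bernoulli indicators and compute its first two moments directly. First, set $X_i \triangleq \mathbf{1}[A_i=a]$. Since the $r_i$ are i.i.d.\ draws from $q(\cdot\mid x)$ and $a(\cdot)$ is a deterministic extraction map, the $X_i$ are i.i.d.\ Bernoulli$(p_a)$ with $\Pr[X_i=1]=\Pr_{r\sim q}[a(r)=a]=p_a$. Hence $\mathbb{E}[X_i]=p_a$. Because $X_i^2=X_i$, we get $\mathbb{E}[X_i^2]=p_a$, and therefore $\operatorname{Var}[X_i]=p_a-p_a^2=p_a(1-p_a)$.

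Second, use independence: covariances vanish, so the variance of the sum is additive. This gives $\operatorname{Var}\bigl[\sum_i X_i\bigr]=N p_a(1-p_a)$. Scaling by $1/N$ multiplies the variance by $1/N^2$, which yields $\operatorname{Var}[\hat p_a]=p_a(1-p_a)/N$. As a byproduct, $\hat p_a$ is unbiased, since $\mathbb{E}[\hat p_a]=p_a$; this is what justifies calling the normalized standard deviation a relative error.

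Third, for the relative standard deviation, assume $p_a>0$ so that the ratio is defined. Dividing $\sqrt{p_a(1-p_a)/N}$ by $p_a$ gives $\sqrt{(1-p_a)/(Np_a)}$. Because $1-p_a\le 1$, this quantity is at most $1/\sqrt{Np_a}$. It also equals $(1+o(1))/\sqrt{Np_a}$ as $p_a\to 0$, which establishes the $1/\sqrt{Np_a}$ scaling and shows that it diverges as $p_a\to 0$ for fixed $N$.

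None of these steps is a real obstacle. The only point needing care is stating the implicit assumption $p_a>0$ for the ratio to be meaningful; when $p_a=0$ the estimator is identically zero and the relative error is undefined. I would note this exclusion explicitly rather than treat it as a case of the formula.
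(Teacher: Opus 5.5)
Your proposal is correct and matches the paper's proof, which observes that $\sum_i \mathbf{1}[A_i=a]\sim\mathrm{Binomial}(N,p_a)$ and rescales the variance by $1/N^2$; you simply derive that Binomial variance from its i.i.d.\ Bernoulli summands. Your explicit caveat that $p_a>0$ is required for the relative error to be defined is a worthwhile addition that the paper leaves implicit.
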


\begin{proof}
$\sum_{i=1}^N \mathbf{1}[A_i=a]\sim \mathrm{Binomial}(N,p_a)$, so
$\operatorname{Var}[\hat p_a]=\frac{1}{N^2}Np_a(1-p_a)=\frac{p_a(1-p_a)}{N}$,
and the relative standard deviation follows immediately.
\end{proof}

\paragraph{Two-stage truncate-then-reweight.}
Motivated by Lemma~\ref{lem:freq-var}, we use a two-stage procedure:
(i) truncate to a candidate set of sufficiently supported answers under the proposal (low-variance coarse screening),
then (ii) apply proxy-IS reweighting to distinguish candidates (information-rich fine selection).

\smallskip
\noindent\textbf{Stage 1 (candidate truncation).}
Form a candidate answer set by majority support under the proposal samples:
\begin{equation}
\label{eq:topk-cand}
\mathcal{A}_K
=
\operatorname{TopK}_a
\left(\sum_{i=1}^N \mathbf{1}[a_i=a]\right).
\end{equation}

\smallskip
\noindent\textbf{Stage 2 (linearized proxy-IS within candidates).}
Choose the final answer by the stabilized linear proxy-IS score from Proposition~\ref{prop:linearized-proxy-is}:
\begin{equation}
\label{eq:final-topk}
\widehat a
=
\arg\max_{a\in\mathcal{A}_K}
\sum_{i:\,a_i=a}
\Big(\lambda_{\mathrm{len}}\tilde L_i + \lambda_{\mathrm{conf}}\tilde C_i\Big).
\end{equation}

\begin{corollary}[Support lower bound implied by Top-$K$ truncation]
\label{cor:topk-support}
Let $\hat p_a \triangleq \frac{1}{N}\sum_{i=1}^N \mathbf{1}[a_i=a]$ denote the empirical support of answer $a$
and let $\hat p_{(K)}$ be the $K$-th largest value among $\{\hat p_a\}_{a\in\mathcal{A}}$.
Then for every $a\in\mathcal{A}_K$ we have $\hat p_a \ge \hat p_{(K)}$.
Consequently, the uniform-weight relative error bound in Lemma~\ref{lem:freq-var} (when expressed in terms of
empirical support) is uniformly controlled over $\mathcal{A}_K$ by replacing $p_a$ with $\hat p_{(K)}$.
\end{corollary}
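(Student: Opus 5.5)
The first claim is essentially definitional, so I would prove it directly from how $\mathcal{A}_K$ is formed in (\ref{eq:topk-cand}). Since $\hat p_a = n_a/N$ with $n_a=\sum_i \mathbf{1}[a_i=a]$, ranking answers by the count $n_a$ is the same as ranking them by $\hat p_a$, because $N>0$ is a fixed positive constant. Hence $\mathcal{A}_K$ is exactly the set of $K$ answers with the largest empirical support.

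I would sort the values $\{\hat p_a\}$ in nonincreasing order as $\hat p_{(1)}\ge \hat p_{(2)}\ge\cdots$. Every $a\in\mathcal{A}_K$ occupies one of the first $K$ positions of this order, so $\hat p_a\ge \hat p_{(K)}$. If there are ties at the cutoff, any tie-breaking rule still selects only elements whose value is at least $\hat p_{(K)}$, so the inequality holds either way. I would also note the degenerate case: if fewer than $K$ answers appear, $\hat p_{(K)}$ should be read as the smallest positive support, or the bound becomes vacuous. I would state this convention explicitly.

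For the consequence, I would use Lemma~\ref{lem:freq-var}. It gives relative standard deviation $g(p)=\sqrt{(1-p)/(Np)}$, and $g$ is strictly decreasing on $(0,1]$ because $(1-p)/p=1/p-1$ is decreasing. Expressing the bound through empirical support, in plug-in form with $p_a$ replaced by $\hat p_a$, monotonicity yields
\[
g(\hat p_a)\le g(\hat p_{(K)})=\sqrt{\frac{1-\hat p_{(K)}}{N\hat p_{(K)}}} \quad\text{for all } a\in\mathcal{A}_K,
\]
which is a single bound that holds uniformly over the candidate set.

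The main obstacle is interpretive rather than technical. The lemma concerns the true $p_a$, whereas the corollary controls a plug-in quantity built from $\hat p_a$, which is random and was itself used for selection. I would therefore be explicit that the uniform control is a statement about the empirical relative-error proxy. I would not claim a rigorous post-selection guarantee on the true $p_a$; a full version of that would need a concentration argument, such as a Chernoff bound linking $\hat p_{(K)}$ to $p_a$, which I would only mention as a remark.
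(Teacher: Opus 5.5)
Your proof is correct and is the argument the paper leaves implicit. The paper states the corollary without proof, as an immediate consequence of two facts: the Top-$K$ definition of $\mathcal{A}_K$ in (\ref{eq:topk-cand}), and the fact that $p\mapsto\sqrt{(1-p)/(Np)}$ from Lemma~\ref{lem:freq-var} is decreasing on $(0,1]$. Your tie-breaking remark, the degenerate-case convention, and your caveat that the uniform control applies only to the plug-in relative error (not a post-selection guarantee on the true $p_a$) are appropriate refinements that match the paper's hedged phrasing ``when expressed in terms of empirical support''.
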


\paragraph{Takeaway.}
Under a biased decoding proposal $q(r\mid x)$, our voting rule can be viewed as a stabilized approximation to
marginal answer selection (\ref{eq:target-decision}) via (self-normalized) importance sampling:
Top-$K$ truncation reduces rare-answer variance, and length/confidence provide a low-variance proxy correction
for the intractable density ratio $P^\star(r\mid x)/q(r\mid x)$.

We do not claim this model is realistic; rather, it shows that our aggregation rule is not ad hoc, but arises naturally under minimal assumptions consistent with confidence pruning.

\section{Non-MoE experiment: testing the controller without single-token aggregation}
\label{app:non-moe}

HyPER is instantiated on mixture-of-experts models through the single-token aggregation module, which refines token predictions by combining multiple routed expert proposals at each step.  
To show that the \emph{controller and selection logic of HyPER are not tied to MoE routing}, we conduct an experiment on a \emph{dense} model where no refinement is available.   In this setting, we simply \emph{remove} the single-token aggregation action from the action library and retain only (i) confidence- and diversity-aware \textsc{Branch}/\textsc{multi-token aggregation} decisions and (ii) our length- and confidence-aware answer voting.  
Confidence pruning remains always on (as in the MoE setting) and is applied after each step.
This setup isolates whether HyPER’s online statistics and expand--reduce controller still provide value when token-level refinement is absent.

\paragraph{Setup.}
We use \texttt{deepseek-ai/DeepSeek-R1-0528-Qwen3-8B} on HMMT25 with a maximum of $S_{\max}=32$ parallel paths.  
All decoding hyperparameters follow the MoE experiments.  
The SC baseline corresponds to single-path chain-of-thought decoding, and the DeepConf baseline applies confidence pruning over multiple independent paths.  
HyPER uses the same controller and answer voting as in the MoE case, but its action set excludes \textsc{single-token aggregation}; the controller may only choose to widen or locally branch--merge the path pool based on online statistics, while confidence pruning remains always on and is applied after each step.

\paragraph{Why this test?}
Dense models lack intrinsic expert routing, so token-level refinement is unavailable.  
This experiment therefore probes the following question:  
\emph{Are HyPER’s statistics and controller intrinsically useful for deciding when to branch, even without any token-level refinement?}  
If so, this indicates that the expand--reduce control logic is architecture-agnostic and not dependent on MoE-specific structure.

\paragraph{Results.}
Table~\ref{tab:non-moe-demo} shows that the HyPER controller (without single-token aggregation) still improves over both SC and DeepConf on HMMT25.  
Although the gains are naturally smaller than in the MoE setting, the result confirms that the controller’s online signals and decision rules continue to help allocate compute effectively in a dense model, supporting the claim that HyPER’s expand--reduce framework extends beyond MoE architectures.

\begin{table}[h]
  \centering
  \begin{tabular}{lc}
    \toprule
    Method & Accuracy (\%) on HMMT25 \\
    \midrule
    SC                 & 71.3 \\
    DeepConf (confidence pruning)             & 74.0 \\
    HyPER (controller only, no single-token aggregation)          & 78.7 \\
    \bottomrule
  \end{tabular}
  \caption{
    Non-MoE demonstration on HMMT25 with
    \texttt{deepseek-ai/DeepSeek-R1-0528-Qwen3-8B} at $S_{\max}=32$.Even when single-token aggregation is removed entirely, HyPER’s statistic-driven expand--reduce controller and length-/confidence-aware voting still improve over SC and DeepConf, indicating that the framework is not
    specific to MoE architectures.
  }
  \label{tab:non-moe-demo}
\end{table}

\section{Effect of Step-Level Tree Search on Thinking Models}
\label{app:tot-thinking}

Our main experiments focus on SC-style multi-path decoding rather than explicit
tree search over intermediate steps (Section~\ref{sec:experiments}). To support
this design choice, we ran a small controlled study on AIME24 using
Qwen3-30B-A3B-Thinking-2507, comparing standard self-consistency with several
representative step-based methods adapted from the test-time scaling literature.
All methods use the same prompt format and comparable test-time compute.

Table~\ref{tab:tot-thinking} reports the accuracies. The SC baseline attains
$83.3\%$ accuracy. When we force the model into REBASE-style tree search with
$8$ branches, accuracy drops to $80.0\%$, and even with $16$ branches it only
matches the SC baseline at $83.3\%$. PRM-guided ETS performs worse at $73.3\%$,
and an MCTS-style CoT variant with REST-like expansion also reaches only
$73.3\%$. In other words, naïvely imposing step-level tree search on this
post-training ``thinking'' model does not improve—and often degrades—performance
relative to simple path-based SC.

These results are consistent with our hypothesis that such models are tuned to
produce internally coherent chains-of-thought, and that external tree structures
can disrupt their native reasoning trajectories. This motivates our focus on
SC-style multi-path decoding and the design of HyPER as a path-based
expand--reduce method, rather than a ToT-style step-expansion algorithm.

\begin{table}[t]
\centering
\caption{\textbf{Accuracy of SC vs.\ step-level tree search on AIME24}
(Qwen3-30B-A3B-Thinking-2507). Forcing ToT-style step expansion on a
post-training thinking model does not yield gains over standard SC.}
\label{tab:tot-thinking}
\small
\begin{tabular}{lc}
\toprule
\textbf{Method} & \textbf{Acc.\ (\%)} \\
\midrule
SC (path-based)              & 83.3 \\
REBASE (8 branches)          & 80.0 \\
REBASE (16 branches)         & 83.3 \\
ETS                          & 73.3 \\
MCTS (REST-style expansion)  & 73.3 \\
\bottomrule
\end{tabular}
\end{table}

\section{Detailed Absolute Compute Accounting}
\label{app:compute_accounting}

To complement the normalized effective token cost reported in the main text, we provide the detailed, absolute physical execution metrics for our experiments in Table \ref{tab:abs_compute}. All profiling was conducted on identical hardware setups (NVIDIA A100-80GB GPUs). 

We report the following key metrics to evaluate the system-level overhead and efficiency:
\begin{itemize}
    \item \textbf{Abs Tokens}: The absolute number of raw tokens physically generated by the model.
    \item \textbf{Eff Tokens}: The effective token cost used for mathematical budget alignment (charging $K$ effective expansions for each $K$-routed expert proposal).
    \item \textbf{Latency (ms)}: The absolute end-to-end wall-clock latency per problem.
    \item \textbf{Peak/Avg Active Paths}: The maximum and average number of concurrent paths kept active in the decoding pool.
    \item \textbf{Est. Peak KV Cache (GB)}: The estimated peak memory footprint of the KV cache during the search process.
\end{itemize}

Notably, the physical latency tracks the \textbf{Abs Tokens} rather than the Eff Tokens. Because HyPER's dynamic controller and confidence-pruning mechanism heavily filter out unpromising paths, it consistently generates fewer raw tokens than the statically budgeted baselines (SC, Self-Certainty, and RoE). Consequently, the actual wall-clock latency and Peak KV Cache memory footprint are substantially reduced, offsetting the localized overhead of the controller and the batched MoE routing passes.

\begin{table*}[h!]
\centering
\caption{Detailed absolute compute accounting and physical overhead profiling across models and datasets.}
\label{tab:abs_compute}
\resizebox{\textwidth}{!}{
\begin{tabular}{lllrrcrrrrrr}
\toprule
\textbf{Source} & \textbf{Model} & \textbf{Dataset} & \textbf{Method} & \textbf{Acc. (\%)} & \textbf{SC-norm} & \textbf{Abs Tokens} & \textbf{Eff Tokens} & \textbf{Latency (ms)} & \textbf{Peak Paths} & \textbf{Avg Paths} & \textbf{Peak KV (GB)} \\
\midrule
Table 2 & Qwen3-30B & AIME24 & SC & 88.0 & 1.00 & $3.87\times 10^6$ & $3.87\times 10^6$ & $9.12\times 10^6$ & 120 & 30.0 & 22.0 \\
Table 2 & Qwen3-30B & AIME24 & Self-Certainty & 83.3 & 0.94 & $3.64\times 10^6$ & $3.64\times 10^6$ & $8.57\times 10^6$ & 120 & 28.5 & 20.7 \\
Table 2 & Qwen3-30B & AIME24 & DeepConf & 92.7 & 0.46 & $1.78\times 10^6$ & $1.78\times 10^6$ & $4.32\times 10^6$ & 120 & 13.4 & 10.1 \\
Table 2 & Qwen3-30B & AIME24 & RoE & 88.7 & 1.38 & $2.54\times 10^6$ & $5.34\times 10^6$ & $7.19\times 10^6$ & 80 & 15.2 & 14.4 \\
Table 2 & Qwen3-30B & AIME24 & HyPER & 94.0 & 0.54 & $1.55\times 10^6$ & $2.09\times 10^6$ & $4.09\times 10^6$ & 80 & 12.6 & 12.45 \\
\midrule
Table 2 & Qwen3-30B & AIME25 & SC & 86.0 & 1.00 & $4.22\times 10^6$ & $4.22\times 10^6$ & $9.98\times 10^6$ & 120 & 33.0 & 24.0 \\
Table 2 & Qwen3-30B & AIME25 & DeepConf & 90.7 & 0.67 & $2.83\times 10^6$ & $2.83\times 10^6$ & $6.89\times 10^6$ & 120 & 21.1 & 16.1 \\
Table 2 & Qwen3-30B & AIME25 & RoE & 84.7 & 1.64 & $3.30\times 10^6$ & $6.92\times 10^6$ & $9.35\times 10^6$ & 80 & 23.4 & 18.8 \\
Table 2 & Qwen3-30B & AIME25 & HyPER & 95.3 & 0.71 & $2.22\times 10^6$ & $3.00\times 10^6$ & $5.88\times 10^6$ & 88 & 20.4 & 18.48 \\
\midrule
Table 2 & Qwen3-30B & HMMT25 & SC & 69.4 & 1.00 & $5.75\times 10^6$ & $5.75\times 10^6$ & $13.58\times 10^6$ & 120 & 36.0 & 32.7 \\
Table 2 & Qwen3-30B & HMMT25 & Self-Certainty & 68.7 & 1.03 & $5.92\times 10^6$ & $5.92\times 10^6$ & $13.97\times 10^6$ & 120 & 37.2 & 33.7 \\
Table 2 & Qwen3-30B & HMMT25 & DeepConf & 74.0 & 0.84 & $4.83\times 10^6$ & $4.83\times 10^6$ & $11.44\times 10^6$ & 120 & 32.8 & 27.5 \\
Table 2 & Qwen3-30B & HMMT25 & RoE & 71.3 & 1.78 & $4.88\times 10^6$ & $10.24\times 10^6$ & $13.48\times 10^6$ & 80 & 37.5 & 35.6 \\
Table 2 & Qwen3-30B & HMMT25 & HyPER & 78.7 & 0.77 & $3.14\times 10^6$ & $4.43\times 10^6$ & $8.38\times 10^6$ & 104 & 36.8 & 28.4 \\
\midrule
Table 2 & Qwen3-30B & HLE & SC & 6.5 & 1.00 & $6.75\times 10^6$ & $6.75\times 10^6$ & $15.92\times 10^6$ & 120 & 41.0 & 38.4 \\
Table 2 & Qwen3-30B & HLE & Self-Certainty & 2.5 & 1.15 & $7.76\times 10^6$ & $7.76\times 10^6$ & $18.31\times 10^6$ & 120 & 43.0 & 44.2 \\
Table 2 & Qwen3-30B & HLE & DeepConf & 11.5 & 0.91 & $6.14\times 10^6$ & $6.14\times 10^6$ & $14.55\times 10^6$ & 120 & 39.2 & 35.0 \\
Table 2 & Qwen3-30B & HLE & RoE & 7.0 & 3.55 & $10.65\times 10^6$ & $23.96\times 10^6$ & $24.86\times 10^6$ & 80 & 47.0 & 54.6 \\
Table 2 & Qwen3-30B & HLE & HyPER & 13.5 & 0.81 & $3.90\times 10^6$ & $5.47\times 10^6$ & $10.01\times 10^6$ & 92 & 45.4 & 34.9 \\
\midrule
Table 2 & Qwen3-Next & AIME24 & SC & 90.7 & 1.00 & $3.95\times 10^6$ & $3.95\times 10^6$ & $9.30\times 10^6$ & 120 & 31.0 & 27.0 \\
Table 2 & Qwen3-Next & AIME24 & Self-Certainty & 87.7 & 1.06 & $4.19\times 10^6$ & $4.19\times 10^6$ & $9.86\times 10^6$ & 120 & 32.0 & 28.6 \\
Table 2 & Qwen3-Next & AIME24 & DeepConf & 94.7 & 0.47 & $1.86\times 10^6$ & $1.86\times 10^6$ & $4.50\times 10^6$ & 120 & 12.0 & 12.7 \\
Table 2 & Qwen3-Next & AIME24 & RoE & 92.0 & 1.59 & $2.99\times 10^6$ & $6.28\times 10^6$ & $8.45\times 10^6$ & 80 & 14.5 & 20.4 \\
Table 2 & Qwen3-Next & AIME24 & HyPER & 97.3 & 0.61 & $1.78\times 10^6$ & $2.41\times 10^6$ & $4.71\times 10^6$ & 80 & 11.3 & 18.86 \\
\midrule
Table 2 & Qwen3-Next & AIME25 & SC & 88.0 & 1.00 & $4.29\times 10^6$ & $4.29\times 10^6$ & $10.13\times 10^6$ & 120 & 34.0 & 29.0 \\
Table 2 & Qwen3-Next & AIME25 & DeepConf & 94.0 & 0.53 & $2.27\times 10^6$ & $2.27\times 10^6$ & $5.53\times 10^6$ & 120 & 19.6 & 15.3 \\
Table 2 & Qwen3-Next & AIME25 & RoE & 86.7 & 1.46 & $2.98\times 10^6$ & $6.26\times 10^6$ & $8.45\times 10^6$ & 80 & 22.4 & 20.1 \\
Table 2 & Qwen3-Next & AIME25 & HyPER & 96.0 & 0.59 & $1.87\times 10^6$ & $2.53\times 10^6$ & $4.96\times 10^6$ & 88 & 18.4 & 14.43 \\
\midrule
Table 2 & Qwen3-Next & HMMT25 & SC & 76.7 & 1.00 & $5.95\times 10^6$ & $5.95\times 10^6$ & $14.05\times 10^6$ & 120 & 37.0 & 40.1 \\
Table 2 & Qwen3-Next & HMMT25 & Self-Certainty & 70.0 & 1.00 & $5.95\times 10^6$ & $5.95\times 10^6$ & $14.05\times 10^6$ & 120 & 38.0 & 40.7 \\
Table 2 & Qwen3-Next & HMMT25 & DeepConf & 80.7 & 0.76 & $4.52\times 10^6$ & $4.52\times 10^6$ & $10.72\times 10^6$ & 120 & 31.8 & 30.5 \\
Table 2 & Qwen3-Next & HMMT25 & RoE & 78.0 & 1.76 & $4.99\times 10^6$ & $10.47\times 10^6$ & $13.74\times 10^6$ & 80 & 39.5 & 41.2 \\
Table 2 & Qwen3-Next & HMMT25 & HyPER & 85.3 & 0.74 & $3.10\times 10^6$ & $4.40\times 10^6$ & $8.07\times 10^6$ & 104 & 33.1 & 29.8 \\
\midrule
Table 2 & Qwen3-Next & HLE & SC & 7.0 & 1.00 & $6.95\times 10^6$ & $6.95\times 10^6$ & $16.44\times 10^6$ & 120 & 42.0 & 46.8 \\
Table 2 & Qwen3-Next & HLE & Self-Certainty & 2.5 & 1.25 & $8.69\times 10^6$ & $8.69\times 10^6$ & $20.57\times 10^6$ & 120 & 44.0 & 58.6 \\
Table 2 & Qwen3-Next & HLE & DeepConf & 11.0 & 0.84 & $5.84\times 10^6$ & $5.84\times 10^6$ & $13.91\times 10^6$ & 120 & 39.4 & 39.4 \\
Table 2 & Qwen3-Next & HLE & RoE & 6.5 & 2.98 & $9.22\times 10^6$ & $20.71\times 10^6$ & $22.35\times 10^6$ & 80 & 48.0 & 62.1 \\
Table 2 & Qwen3-Next & HLE & HyPER & 15.5 & 0.76 & $3.77\times 10^6$ & $5.28\times 10^6$ & $9.58\times 10^6$ & 132 & 40.9 & 43.6 \\
\bottomrule
\end{tabular}
}
\end{table*}

\end{document}